\documentclass{article} %
\usepackage{dopt_preprint,times}

\usepackage{wrapfig}

\usepackage{amsmath,amsfonts,bm}

\usepackage{amsthm}
\usepackage{enumitem}
\usepackage{algorithm}
\usepackage{algpseudocode}
\usepackage{graphicx}
\usepackage{booktabs}
\usepackage{multirow}
\usepackage{adjustbox}
\usepackage{mdframed}
\usepackage{xcolor}

\newcommand{\eg}{{\it e.g.}}
\newcommand{\ie}{{\it i.e.}}
\newcommand{\st}{{\textrm{s.t.}}}

\newcommand{\dev}[1]{\mathrm{d} #1}

\newcommand{\vecm}[1]{\mathrm{vec} (#1)}
\newcommand{\ip}[1]{\left \langle #1 \right \rangle }

\newcommand{\aff}[1]{\text{aff}(#1)}

\newcommand{\coneK}{\mathcal{K}}
\newcommand{\relint}[1]{\text{relint}(#1)}
\algrenewcommand\algorithmicrequire{\textbf{Input:}}
\algrenewcommand\algorithmicensure{\textbf{Output:}}
\newcounter{problem}
\newcommand{\problabel}[1]{\stepcounter{equation}\refstepcounter{problem}\tag{P\theproblem}\label{#1}}

\newtheorem{theorem}{Theorem}[section]
\newtheorem{definition}{Definition}[section]
\newtheorem{proposition}{Lemma}[section]

\surroundwithmdframed[
  topline=false,
  bottomline=false,
  rightline=false,
  leftline=true,
  linewidth=3pt,
  linecolor=black!30,
  leftmargin=0.0em,
  innerleftmargin=0.7em,
  rightmargin=0em,
  innerrightmargin=0em,
  innertopmargin=0.15em,
  innerbottommargin=0.15em,
  skipabove=0.7em,
  skipbelow=0.7em
]{theorem}

\def\eqref#1{equation~\ref{#1}}
\def\Eqref#1{Equation~\ref{#1}}

\def\1{\bm{1}}

\DeclareMathAlphabet{\mathsfit}{\encodingdefault}{\sfdefault}{m}{sl}
\SetMathAlphabet{\mathsfit}{bold}{\encodingdefault}{\sfdefault}{bx}{n}

\DeclareMathOperator*{\argmin}{arg\,min}

\newcommand{\Deltaz}{\bar{ z }}
\newcommand{\DeltaZ}{\bar{ Z }}
\newcommand{\Jap}{J_{\mathrm{ap}}}
\newcommand{\Jna}{J_{\mathrm{na}}}

\numberwithin{table}{section}
\numberwithin{figure}{section}

\usepackage{hyperref}

\usepackage{url}
\usepackage{array}
\usepackage{placeins}

\hypersetup{hidelinks}

\usepackage{titlesec}

\title{{\upshape d}OPT: Differentiating Conic Optimization via Geometric Reduction}

\author{Fengyu Yang, Connor W. Magoon, Tyler Watts, Shahar Z. Kovalsky \\
Department of Mathematics \\
University of North Carolina at Chapel Hill
}

\begin{document}

\maketitle

\begin{abstract}

Optimization layers enable the incorporation of structured constraints and decision problems into learning systems. Training such systems requires differentiating through the embedded optimization problem, which can be challenging for general conic programs. We introduce \textbf{dOPT}, a solver-agnostic framework that, rather than differentiating the full conic formulation, reduces it at a computed primal-dual solution to an equality-constrained quadratic program that preserves the reference solution and its first-order sensitivity. The reduction captures the local first- and second-order conic geometry relevant to differentiation and remains well defined at singular configurations. Computing solution derivatives then requires a single symmetric linear solve, independently of the forward solver. We derive explicit reductions for convex NLPs, QPs, SOCPs, and SDPs. Numerical experiments validate the computed gradients and show favorable backward-pass scalability, with substantial speedups over existing differentiable conic optimization methods as problem size increases.

\end{abstract}

\section{Introduction}
\label{sec:introduction}

Optimization layers incorporate mathematical structure into learning models by defining their outputs through parameterized optimization problems \citep{amos2017optnet,cvxpylayers2019}. They have been used across control, decision-making, robotics, imaging, and other structured learning tasks. Training such models requires differentiating the optimal solution with respect to problem parameters, \ie, characterizing its local sensitivity to parameter perturbations, making the backward pass central to the practical use of optimization layers.

A standard approach differentiates the optimality conditions of the original problem. OptNet differentiates the KKT system for quadratic programs (QPs) \citep{amos2017optnet}, while DiffCP and CVXPYLayers extend implicit differentiation to conic programs \citep{agrawal2019diffcp,cvxpylayers2019}. However, the structure needed to \emph{solve} a problem often exceeds what is needed to characterize the \emph{local sensitivity} of its solution. Exploiting this distinction, dQP \citep{magoon2026differentiation} constructs locally first-order-equivalent equality-constrained problems for QPs, BPQP \citep{pan2024bpqp} formulates an equality-constrained backward QP for smooth convex programs, including certain SOCPs, and FFOLayer \citep{zhao2025fully} approximates gradients via finite differences on an equality-constrained surrogate. These methods use a solution computed by an arbitrary forward solver to construct a simpler proxy problem from which derivatives are computed.

This suggests a broader reduction principle: retain only the local constraint geometry that affects first-order sensitivity at the solution. Extending this principle beyond linearly constrained problems, however, is nontrivial. For linear inequality constraints, the relevant local geometry can be captured by a carefully selected subset of active constraints. More general, nonpolyhedral cones have curved boundaries whose normals vary under local perturbations of the solution. Simply replacing the cone locally by its tangent space can therefore be insufficient: changes in the active normal direction introduce a curvature term that is essential for recovering the correct solution derivative. The challenge extends to singular boundary configurations where a smooth local description may fail, including second-order cone apexes and PSD matrices with multiple zero eigenvalues.

We introduce \textbf{dOPT}, a solver-agnostic geometric framework that constructs an explicit, first-order equivalent equality-constrained QP from a primal--dual solution and the local conic geometry. Building on classical sensitivity analysis \citep{fiacco1983introduction,robinson1980strongly,bonnans2000perturbation}, dOPT identifies the perturbation directions relevant to the derivative of the solution map through the \emph{critical cone}, while accounting for variation of the cone normal along those directions. Under standard regularity conditions, these effects are encoded by two matrices: $C^*$, whose nullspace is the critical cone, and $H^*$, which captures the corresponding normal variation. This construction naturally accommodates the singular configurations described above.

Together, $C^*$ and $H^*$ define the constraints and curvature correction of the locally equivalent reduced formulation, which preserves both the reference solution and its parameter derivative. Differentiating its KKT conditions yields a single symmetric linear system, and the construction requires only the problem data and a computed primal--dual solution. The derivative can therefore be efficiently evaluated independently of how the forward solution is obtained, allowing mature black-box solvers such as Gurobi or MOSEK to be used without differentiating through their algorithms.

\textbf{Our main contributions are:}
\begin{itemize}[leftmargin=*, topsep=0em]
    \item We derive a first-order equivalent reduction for convex conic programs that preserves both the reference solution and its sensitivity, while naturally accommodating nonsmooth conic geometry.

    \item We provide explicit critical-cone and curvature constructions for NLPs, QPs, SOCPs, and SDPs, yielding reduced equality-constrained QPs from which solution derivatives can be efficiently computed with a single symmetric linear solve. In particular, the derivations accommodate singular configurations such as SOC apexes and PSD solutions with multiple zero eigenvalues.

    \item We provide a straightforward, solver-agnostic implementation of dOPT, with full flexibility in the choice of forward solver. The code will be publicly released upon acceptance.
    
    \item We evaluate our implementation on synthetic SOCPs and SDPs and an SDP-based learning task, validating gradient accuracy on the synthetic problems and demonstrating favorable backward-pass scalability and substantial speedups over existing differentiable conic optimization methods.

\end{itemize}

\section{Related Work}
\label{sec:related-work}

Differentiable optimization has diverse applications, including control \citep{Amos2018MPC}, differentiable physics \citep{Belbute-Peres2018DiffPHysforControl}, and certifiable robotics \citep{Holmes2024SDPCertifiableGlobalDerivative}. A similarly broad range of differentiation methods has been developed, including modular implicit differentiation \citep{blondel2022efficient,besanccon2024flexible}, conic differentiation \citep{agrawal2019diffcp,cvxpylayers2019,healey2025differentiating}, alternating and augmented-Lagrangian methods \citep{sun2022alternating,butler2023efficient,butler2023scqpth,bambade2024qplayer}, solver-iteration differentiation \citep{oshin2026unfolding,liu2026residual}, and penalty or perturbation methods \citep{linghu2026penalty,LPGD}. Extensions address nonconvexity \citep{rosemberg2025streamlined}, GPU computation \citep{moreau2026}, and code generation \citep{schaller2025code}. Our focus is sensitivity analysis and local reductions of the backward problem, which are most closely related to our approach.

Classical sensitivity analysis characterizes solution derivatives through constraint qualifications, critical cones, and second-order geometry \citep{fiacco1983introduction,robinson1980strongly,bonnans2000perturbation,bonnans1999second,mohammadi2021parabolic}, including nonpolyhedral curvature \citep{bonnans1998sensitivity} and SOCP/SDP sensitivity \citep{bonnans2005perturbation,shapiro1997first,alizadeh1997complementarity,sun2006strong}. Related work addresses partial smoothness and stratification \citep{lewis2002active,bao2026stratification}, weaker regularity \citep{andreani2023minimal,andreani2024weak,fukuda2023weak}, nonsmooth differentiation and nonunique solutions \citep{bolte2021nonsmooth,bolte2026adjoint}, composite bilevel problems \citep{solla2026optimistic}, and directional influence \citep{wang2026directional}. KKT-based implicit differentiation connects this theory to optimization layers \citep{barratt2018differentiability}; see \citet{pacaud2025sensitivity} for a recent survey.

Several methods simplify the backward problem through local reductions. BPQP \citep{pan2024bpqp} formulates an equality-constrained backward QP for smooth convex programs, including certain SOCPs, while dQP \citep{magoon2026differentiation} constructs a first-order-equivalent equality-constrained QP through active-set reduction. FFOLayer \citep{zhao2025fully} constructs an equality-constrained surrogate with linearized active constraints and a Lagrangian objective, then approximates hypergradients via finite differences using perturbed surrogate solves. Related reductions appear in SOCP trajectory sensitivity \citep{xu2025trajectory} and tangent-space methods for decision-focused learning \citep{lee2026pear}. dOPT extends this viewpoint to general convex conic programs, using critical-cone geometry and curvature to construct a reduced QP whose KKT system yields solution derivatives through a single symmetric linear solve.

\section{Preliminaries}
\label{sec:preliminaries}

In this paper, we consider the \textit{parametric convex conic program }
\begin{equation}
\begin{aligned}
z^*(\theta) = \argmin_{z} \quad & f_{\theta}(z)  \\
\st \quad & G_{\theta}(z) \in \coneK \\
 & h_{\theta}(z) = 0, \\
\end{aligned} \problabel{prob:conic}
\end{equation}
where $z\in\mathbb{R}^{n}$ is the optimization variable and $\theta\in\mathbb{R}^{s}$ is the problem parameter. The \textit{solution map} $z^*(\theta):\mathbb{R}^{s}\to\mathbb{R}^n$ maps parameters to optimal solutions. The parametric objective and constraint functions $f_{\theta}(z):=f(z,\theta):\mathbb{R}^n\times\mathbb{R}^{s}\to\mathbb{R}$, $G_{\theta}(z):=G(z,\theta):\mathbb{R}^n\times\mathbb{R}^{s}\to\mathbb{R}^m$ and $h_{\theta}(z):=h(z,\theta):\mathbb{R}^n\times\mathbb{R}^{s}\to\mathbb{R}^l$ and their Jacobians with respect to $z$ are continuously differentiable jointly in $(z,\theta)$. For fixed $\theta$, $f_\theta(z)$ is convex, $h_{\theta}(z)$ is affine and the set $\{z:G_{\theta}(z)\in \mathcal{K}\}$ is convex, where $\coneK$ is a closed convex cone. The parameter $\theta$ encodes perturbable problem data, and we study the response (sensitivity) of $z^*(\theta)$ to variations in this data. For example, in a quadratic program with $f_{\theta}(z)=\frac{1}{2}z^TPz+q^Tz$, $\theta$ may encode $(P,q)$, and the corresponding sensitivities include derivatives such as $dz^* / dP$ and $dz^* / dq$.

\subsection{The KKT Optimality Conditions and the Lagrangian}
The first-order Karush–Kuhn–Tucker (KKT) conditions \citep{Karush1939, Kuhn1951, boyd2004convex, nocedal2006numerical} characterize the optimal points of the convex constrained optimization problems, which are the key to analyzing the sensitivity of the optimal solution with respect to the parameters. The KKT conditions for the parametric convex conic program \ref{prob:conic} are
\begin{subequations}
\begin{align}
\nabla_z \mathcal{L}_{\theta}(z^*(\theta),\mu^*(\theta),\nu^*(\theta)) &= 0
\label{eqn:kkt_stationarity} \\
h_{\theta}(z^*(\theta)) &= 0
\label{eqn:kkt_primal_feas} \\
y^*(\theta) := G_{\theta}(z^*(\theta)) &\in \coneK
\label{eqn:kkt_primal_cone} \\
\mu^*(\theta) &\in \coneK^*
\label{eqn:kkt_dual_cone} \\
\langle \mu^*(\theta), y^*(\theta)\rangle &= 0
\label{eqn:kkt_complementarity}
\end{align}
\end{subequations}
where $\mu^*(\theta)$ and $\nu^*(\theta)$ are the \textit{optimal dual variables} associated with the conic and affine constraints, respectively, stationarity \Eqref{eqn:kkt_stationarity} is expressed in terms of the \textit{Lagrangian} $\mathcal{L}_{\theta}(z,\mu,\nu) = f_{\theta}(z) - \mu^T G_{\theta}(z) - \nu^T h_{\theta}(z)$,
and $\coneK^* = \{\mu : \langle \mu, y \rangle \geq 0, \forall y \in \coneK \}$ is the \textit{dual cone} of $\coneK$.

\subsection{Cone Geometry}\label{sec:conic_geometry}

To analyze how the solution of Problem~\ref{prob:conic} responds to changes in the problem parameters, we characterize the local cone geometry near a primal--dual solution. The \textit{tangent cone} describes feasible first-order directions at a point $y \in \coneK$, while the \textit{normal cone} describes the corresponding supporting directions:
$$
\mathcal{T}_{\coneK}(y) := \text{cl} \{ \alpha(u-y) : \alpha \geq 0, u \in \coneK \}, \qquad
\mathcal{N}_{\coneK}(y) := \{ w : \langle w, u-y \rangle \leq 0, \forall u \in \coneK \}.
$$
The same notions apply to the dual cone $\coneK^*$, giving $\mathcal{T}_{\coneK^*}(\mu)$ and $\mathcal{N}_{\coneK^*}(\mu)$ for $\mu \in \coneK^*$. Together, these cones characterize the feasible first-order primal--dual perturbation directions \citep{Rockafellar1970ConvexAnalysis,bonnans2000perturbation,boyd2004convex}.

The key geometric object governing solution sensitivity is the \textit{critical cone} associated with a complementary primal--dual pair $y^* \in \coneK$ and $\mu^* \in \coneK^*$,
$$
\mathcal{C}_{\coneK}(y^*) = \mathcal{T}_{\coneK}(y^*) \cap \mu^{*\perp}, \qquad
\mathcal{C}_{\coneK^*}(\mu^*) = \mathcal{T}_{\coneK^*}(\mu^*) \cap y^{*\perp},
$$
\begin{wrapfigure}[8]{r}{0.26\textwidth}
    \vspace{-1.5em}
    \includegraphics[width=\linewidth]{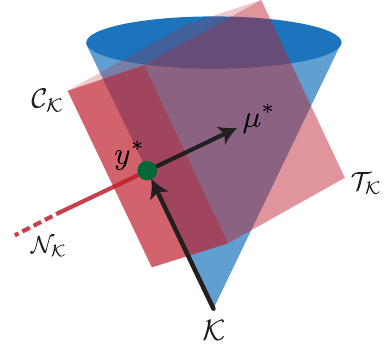}
    \label{fig:schematic}
\end{wrapfigure}
which is determined entirely by the local first-order cone geometry \citep[Proposition 3.10, p.~152]{bonnans2000perturbation}. The inset illustrates a complementary pair and its associated cones. While the tangent cones encode first-order feasibility, the critical cones further restrict perturbations through the linearized complementarity condition \Eqref{eqn:kkt_complementarity}. Intuitively, this restricts attention to the perturbation directions relevant to sensitivity analysis and along which cone curvature must be accounted for.
These are cone-space objects; linearized equality constraints are imposed separately for the full problem.

\subsection{Regularity Assumptions and Notation} \label{sec:notations}

Our analysis relies on standard assumptions from sensitivity analysis. Strict complementarity holds at a complementary primal--dual pair $(y^*,\mu^*)$ if $\mu^*\in\relint{-\mathcal{N}_{\coneK}(y^*)}$; geometrically, it provides a clear separation between active and inactive cone geometry and supports its stable identification under small perturbations. We additionally assume non-degeneracy, the second-order sufficient condition and $\mathcal{C}^{2}$-cone reducibility whose definitions are deferred to Appendix~\ref{app:preliminaries}. Under non-degeneracy and strict complementarity, the critical cone is in fact a linear subspace,
$\mathcal{C}_{\coneK}(y^*)=\aff{\mathcal{N}_{\coneK}(y^*)}^{\perp}$
\citep[Proposition 4.73, p.~317]{bonnans2000perturbation}. This property is central to constructing a reduced problem that preserves the first-order behavior of the original problem.

Throughout, $(\theta)$ denotes dependence of a solution map on the parameter $\theta$, whereas the subscript $\theta$ denotes a function parameterized by $\theta$. Let $(z^*(\theta),\mu^*(\theta),\nu^*(\theta))$ denote the primal--dual solution map. Fix a reference parameter $\theta_0$ and define the reference solution $(z^*,\mu^*,\nu^*):=(z^*(\theta_0),\mu^*(\theta_0),\nu^*(\theta_0))$. For gradients, Jacobians and Hessians evaluated at the reference solution, we suppress the solution arguments while retaining the parameter subscript; for example, the Hessian of the Lagrangian $\nabla^2_{zz}\mathcal{L}_{\theta}:=\nabla^2_{zz}\mathcal{L}_{\theta}(z^*,\mu^*,\nu^*)$. Evaluation is indicated explicitly by the subscript; for example, $\nabla_z G_{\theta}:= \nabla_z G_\theta(z^*)$, and $\nabla_z G_{\theta_0}:= \left.\nabla_z G_\theta(z^*)\right|_{\theta=\theta_0}$. For a direction $\dev\theta\in\mathbb R^s$, denote the directional differentials at $\theta_0$ by
$$
\dev z^*:=Dz^*(\theta_0)[\dev\theta],\qquad
\dev\mu^*:=D\mu^*(\theta_0)[\dev\theta],\qquad
\dev\nu^*:=D\nu^*(\theta_0)[\dev\theta].
$$
Similarly, define
$$
y^*(\theta):=G_\theta(z^*(\theta)),\qquad
y^*:=G_{\theta_0}(z^*),\qquad
\dev y^*:=Dy^*(\theta_0)[\dev\theta].
$$

\section{dOPT: First-Order Equivalent Reduction}
\label{sec:method}
This section presents our main contribution: a reduced problem that is first-order equivalent to the conic problem \ref{prob:conic} at the optimal solution $(z^*,\mu^*,\nu^*)$, yielding the same local solution and the same derivative with respect to the parameters. The central observation is that only the active cone geometries at the solution influence its first-order sensitivity. By restricting attention to these active directions and eliminating those that do not contribute to the gradient, we obtain a simplified problem. The resulting reduced formulation is a quadratic program with only equality constraints, capturing precisely the degrees of freedom relevant for differentiation. As a result, the gradient can be computed efficiently by solving a single linear system. This framework fully decouples optimization and differentiation: one may use any black-box solver to compute the solution of the original problem, and subsequently differentiate it solely through the reduced problem.

Under the nondegeneracy, strict complementarity, second-order sufficient conditions and $\mathcal{C}^{2}$-cone reducibility at the optimal solution $(z^*,\mu^*,\nu^*)$ of Problem~\ref{prob:conic}, the following theorem constructs a reduced problem that is first-order equivalent to the original conic problem. A proof is provided in Appendix~\ref{app:proof_conic_program}.

\begin{theorem}\label{theorem:conic_program}
The conic Problem~\ref{prob:conic} is first-order equivalent at $\theta_0$ to
\begin{equation}
\begin{aligned}
\hat{z}(\theta) = \argmin_{z} \quad & \nabla_{z}f_{\theta}^{T} \Deltaz + \frac{1}{2}\Deltaz^{T} \left(\nabla^{2}_{zz} \mathcal{L}_{\theta} - \nabla_z G_{\theta}^{T}H^* \nabla_z G_{\theta}\right) \Deltaz \\
\st \quad & C_\theta^*\left(G_\theta(z^*)-G_{\theta_0}(z^*) + \nabla_z G_{\theta}\Deltaz\right) = 0 \\
& h_\theta(z^*) + \nabla_z h_\theta\Deltaz = 0
\end{aligned}
\problabel{prob:reduced_conic}
\end{equation} %

where $\Deltaz:=z-z^*$. The matrices $C_\theta^*$ and $H^*$ form a compatible pair: $C_\theta^*$ varies smoothly, has full row rank, and satisfies $\ker C^*=\mathcal C_{\coneK}(y^*)$, where $C^*:=C_{\theta_0}^*$. The symmetric curvature matrix $H^*$ determines the dual variation up to a normal residual: $\mathrm d\mu^*-H^*\mathrm dy^*\in\mathcal C_{\coneK}(y^*)^\perp$ for every solution differential $(\mathrm dy^*,\mathrm d\mu^*)$.

At $\theta_0$, the primal--dual solution map $(\hat z(\theta),\hat\mu(\theta),\hat\nu(\theta))$ of the reduced Problem~\ref{prob:reduced_conic} is related to the primal--dual solution map of the original problem by
\begin{equation}
    z^* = \hat{z}(\theta_0), \qquad
    \mu^* = C^{*T}\hat{\mu}(\theta_0), \qquad
    \nu^* = \hat{\nu}(\theta_0),
\end{equation}
and their directional differentials satisfy, for every parameter direction $\dev\theta$,
\begin{equation}
    \dev z^* = \dev\hat{z}, \qquad
    \dev\mu^* = H^* \dev y^* + C^{*T}\dev\hat{\mu}, \qquad
    \dev\nu^* = \dev\hat{\nu}.
\label{eqn:thm_dmu}
\end{equation}
Consequently, the differential $(\dev \hat{z}, \dev \hat{\mu}, \dev \hat{\nu})$ is obtained by solving the linear system
\begin{equation}
\begin{bmatrix}
Q_{\theta_0} & -\nabla_z G_{\theta_0}^T C^{*T} & -\nabla_z h_{\theta_0}^T \\
-C^*\nabla_z G_{\theta_0} & 0 & 0\\
-\nabla_z h_{\theta_0} & 0 & 0
\end{bmatrix}
\begin{bmatrix}
\dev \hat{z} \\
\dev \hat{\mu} \\
\dev \hat{\nu}
\end{bmatrix}
=
\begin{bmatrix}
-\rho_{\theta_0} \\
C^*\nabla_\theta G_{\theta_0} \\
\nabla_\theta h_{\theta_0}
\end{bmatrix}\dev\theta,
\label{eqn:reduced_conic_diff_kkt_matrix}
\end{equation}
where
\begin{equation*}
\begin{aligned}
Q_{\theta_0} &:= \nabla^{2}_{zz} \mathcal{L}_{\theta_0} - \nabla_z G_{\theta_0}^T H^* \nabla_z G_{\theta_0},\\
\rho_{\theta_0} &:= \nabla^{2}_{z\theta} \mathcal{L}_{\theta_0}-\nabla_z G_{\theta_0}^T H^* \nabla_\theta G_{\theta_0}.
\end{aligned}
\end{equation*}
\end{theorem}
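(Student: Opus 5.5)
The plan is to obtain the differential of the original solution map from classical sensitivity theory. Then we check that the reduced Problem~\ref{prob:reduced_conic} reproduces both the reference solution and the same linearized system.

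\textbf{Step 1: differentiability of the original solution map.} Under nondegeneracy, strict complementarity, second-order sufficiency and $\mathcal C^2$-cone reducibility, the results in \citet{bonnans2000perturbation} (strong regularity, Robinson \citep{robinson1980strongly}) give a locally unique, directionally differentiable primal--dual map. By strict complementarity and nondegeneracy, the critical cone $\mathcal C_{\coneK}(y^*)$ is a linear subspace (Proposition 4.73 cited in the preliminaries), so the directional derivative is linear in $\dev\theta$. Differentiating the KKT system gives two relations:
\[
\nabla^2_{zz}\mathcal L_{\theta_0}\dev z^*+\nabla^2_{z\theta}\mathcal L_{\theta_0}\dev\theta-\nabla_zG_{\theta_0}^T\dev\mu^*-\nabla_zh_{\theta_0}^T\dev\nu^*=0,
\qquad
\nabla_zh_{\theta_0}\dev z^*+\nabla_\theta h_{\theta_0}\dev\theta=0,
\]
with $\dev y^*=\nabla_zG_{\theta_0}\dev z^*+\nabla_\theta G_{\theta_0}\dev\theta$. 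For the cone part, I would use $\mathcal C^2$-reducibility: locally $\coneK=\{y:\Xi(y)\in\mathcal Q\}$ with $\mathcal Q$ pointed and $\Xi(y^*)=0$. By strict complementarity, $y^*(\theta)$ stays on the stratum $\Xi(y)=0$, so $\dev y^*\in\ker D\Xi(y^*)=\mathcal C_{\coneK}(y^*)$. Equivalently, $C^*\dev y^*=0$, taking $C^*$ as a full-row-rank matrix whose rows span $\mathcal C^\perp$. Likewise $\mu^*(\theta)=D\Xi(y^*(\theta))^T\lambda(\theta)$, and differentiating this gives $\dev\mu^*=[D^2\Xi(y^*)^T\lambda^*]\dev y^*+D\Xi^T\dev\lambda$. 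This expression \emph{defines} $H^*$ as the symmetric curvature term and places $\dev\mu^*-H^*\dev y^*$ in $\mathcal C^\perp=\mathrm{range}(C^{*T})$. We can therefore write $\dev\mu^*=H^*\dev y^*+C^{*T}\dev\hat\mu$ for a unique $\dev\hat\mu$, unique because $C^*$ has full row rank. A smooth $C^*_\theta$ comes from $D\Xi(y^*(\theta))$ or any smooth full-rank basis.

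\textbf{Step 2: the reduced problem at $\theta_0$.} At $\theta_0$, the point $\bar z=0$ satisfies both constraints. Its KKT stationarity reads $\nabla_zf_{\theta_0}=\nabla_zG^TC^{*T}\hat\mu+\nabla_zh^T\hat\nu$. Since $\mu^*\in\mathcal N$-directions we have $\mu^*\in\mathcal C^\perp$, so $\mu^*=C^{*T}\hat\mu$ for a unique $\hat\mu$, and stationarity holds with $\hat\nu=\nu^*$. Uniqueness of the reduced solution follows from the linearly independent constraints, which is where nondegeneracy enters. It also uses positive definiteness of $Q_{\theta_0}$ on the null space of the constraints, namely $\{\dev z: \nabla_zG\,\dev z\in\mathcal C,\ \nabla_zh\,\dev z=0\}$. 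That is exactly the second-order sufficient condition, because on the critical cone the sigma term equals $-\dev y^{T}H^*\dev y$.

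\textbf{Step 3: matching the linearizations.} Differentiate the reduced KKT system in $\theta$ at $\theta_0$. In the reduced stationarity condition, the $\theta$-dependence of $\nabla_zG_\theta^TH^*\nabla_zG_\theta\bar z$ vanishes at $\bar z=0$ except through $\bar z$ itself. The term $\nabla_\theta$ of $C^*_\theta$ multiplies $\hat\mu$ in the combination $\nabla_zG^T(\dev C^*)^T\hat\mu$; this must be reconciled with the $H^*$ term. The constraint derivative is $C^*(\nabla_\theta G\,\dev\theta+\nabla_zG\,\dev\hat z)=0$, because the $\dev C^*_\theta$ term multiplies a vanishing residual. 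Substituting the Step 1 decomposition of $\dev\mu^*$ into the original linearized stationarity yields exactly the system (\ref{eqn:reduced_conic_diff_kkt_matrix}), with $Q_{\theta_0}$ and $\rho_{\theta_0}$ collecting the $H^*$ terms. By Step 2 this KKT matrix is nonsingular, so the solutions coincide, giving $\dev z^*=\dev\hat z$ and the other identities in (\ref{eqn:thm_dmu}).

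\textbf{Main obstacle.} The delicate point is the compatibility between $C^*_\theta$ and $H^*$. The derivative of $C^{*T}_\theta\hat\mu$ contributes a term $(\partial_\theta C^{*T})\hat\mu$. It must be accounted for either by choosing $C^*_\theta$ so that $\dev C^{*T}\hat\mu$ is absorbed into $H^*\dev y^*$ modulo $\mathcal C^\perp$, or by showing it lies in $\mathrm{range}(C^{*T})$. I would first try defining the pair jointly from the reduction map $\Xi$, so that this term is precisely the curvature piece of $H^*$. A second delicate point is singular configurations such as the SOC apex, where $\Xi$ is a local diffeomorphism onto a pointed cone and $\mathcal C=\{0\}$. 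There I would verify the argument directly, noting that it still goes through because $C^*$ is then simply invertible.
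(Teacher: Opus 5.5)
There is a genuine gap, and it is the point you defer as the ``main obstacle.'' It is the crux of the theorem, not a side issue. On the reduced side, the $Q$-term has no $\theta$-derivative at $\Deltaz=0$, and the $\mathrm{d}C_\theta^*$ term in the constraint multiplies a zero residual. So the only possible source of the $-\nabla_zG_{\theta_0}^TH^*\nabla_\theta G_{\theta_0}\mathrm{d}\theta$ part of $\rho_{\theta_0}$ is the stationarity term $-\nabla_zG_{\theta_0}^T(\mathrm{d}C_\theta^{*T})\hat\mu(\theta_0)$. Step 3 therefore needs the exact identity $\mathrm{d}C_\theta^{*T}\hat\mu(\theta_0)=H^*\nabla_\theta G_{\theta_0}\mathrm{d}\theta$. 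Substituting your decomposition of $\mathrm{d}\mu^*$ produces the linear system only for the \emph{original} problem; nothing in your argument shows that the reduced problem yields the same system.

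Moreover, the choices you actually offer in Step 1 violate this identity:
\begin{itemize}
\item An arbitrary smooth full-rank basis, e.g.\ $C_\theta^*\equiv C^*$, gives $\mathrm{d}C_\theta^*=0$ and drops the term entirely.
\item A $\theta$-dependent rescaling $M_\theta\Xi'$ adds a component in $\operatorname{range}(C^{*T})$. This shifts $\mathrm{d}\hat\mu$ and breaks $\mathrm{d}\mu^*=H^*\mathrm{d}y^*+C^{*T}\mathrm{d}\hat\mu$.
\item $D\Xi(y^*(\theta))$, evaluated along the true solution path, gives $\mathrm{d}C_\theta^{*T}\hat\mu(\theta_0)=H^*(\nabla_zG_{\theta_0}\mathrm{d}z^*+\nabla_\theta G_{\theta_0}\mathrm{d}\theta)$. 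This counts the curvature $-\nabla_zG^TH^*\nabla_zG$ a second time, on top of the copy already in $Q_{\theta_0}$. The discrepancy generally cannot be absorbed into $\mathrm{d}\hat\mu$, and this choice also makes the reduced problem depend on the unknown solution map.
\end{itemize}

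What closes the gap is the idea you mention but do not execute: build $C_\theta^*$ from the \emph{same} reduction map that defines $H^*$, and freeze the reference point. Using your representation $\mu^*(\theta)=\Xi'(y^*(\theta))^T\lambda(\theta)$, take
\begin{itemize}
\item $C(y):=\Xi'(y)$ and $\hat\mu(\theta_0):=\lambda^*$;
\item $H^*:=\sum_i\lambda_i^*\nabla^2\Xi_i(y^*)$, which is your curvature term;
\item $C_\theta^*:=C(G_\theta(z^*))$, with $z^*$ held fixed at the reference.
\end{itemize}
Then $DC(y^*)[v]^T\hat\mu(\theta_0)=\sum_i\lambda_i^*\nabla^2\Xi_i(y^*)v=H^*v$ for every $v$, and the chain rule gives $\mathrm{d}C_\theta^{*T}\hat\mu(\theta_0)=H^*\nabla_\theta G_{\theta_0}\mathrm{d}\theta$. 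This is the same $H^*$ that appears when differentiating the multiplier representation, so the two linearizations now coincide term by term. A constant invertible left factor on $C_\theta^*$ would be harmless; a $\theta$-dependent one would not. This is the paper's construction, and with it your Steps 1--3 go through.

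Your Step 2 nonsingularity argument is more explicit than the paper's. Full row rank follows from nondegeneracy, since $\operatorname{lin}\mathcal T_{\coneK}(y^*)=\ker\Xi'(y^*)$ for a pointed reduction cone. Positive definiteness of $Q_{\theta_0}$ on $\mathcal D^*$ follows from SOSC. Watch the sign there: for $v\in\mathcal C_{\coneK}(y^*)$ one has $\sigma(-\mu^*,\mathcal T^2_{\coneK}(y^*,v))=+v^TH^*v$, which is what turns the SOSC expression into $d^TQ_{\theta_0}d$.
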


Theorem~\ref{theorem:conic_program} yields a solver-agnostic procedure for computing solution derivatives. Given a primal--dual solution $(z^*,\mu^*,\nu^*)$ obtained from any forward solver, the matrices $C^*$ and $H^*$ are constructed from the local cone geometry at $y^*$. These characterize the critical cone $\mathcal{C}_{\coneK}(y^*)$ and the variation of the dual solution along it, as in \Eqref{eqn:thm_dmu}. They define the reduced problem~\ref{prob:reduced_conic}, from which $\mathrm{d}z^*$, $\mathrm{d}\mu^*$, and $\mathrm{d}\nu^*$ are recovered by solving the linear system~\Eqref{eqn:reduced_conic_diff_kkt_matrix}.

\begin{wrapfigure}[15]{r}{0.18\textwidth}
    \centering
    \vspace{-2.2em}
    \includegraphics[width=0.17\textwidth]{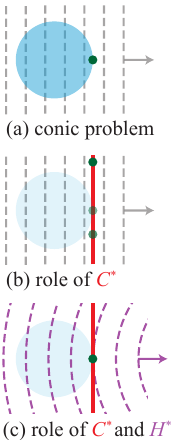}
\end{wrapfigure}
\textbf{Remarks.}
(\textit{i}) $C^*$ and $H^*$ capture complementary first- and second-order components of the local cone geometry, as illustrated in the inset. Panel (a) depicts the original conic problem. In (b), $C^*$ restricts perturbations to the admissible directions in the critical cone, but this first-order linearization alone may admit extraneous solutions that violate the original conic constraint. In (c), the quadratic correction induced by $H^*$ accounts for the variation of the cone normal along these directions, removing this ambiguity. Crucially, this correction preserves the parameter sensitivity of the original problem, making the reduced problem first-order equivalent as stated in Theorem~\ref{theorem:conic_program}.
\smallskip
\\
(\textit{ii}) Global smoothness of $\mathcal{K}$ is not required. When the active boundary is locally smooth, the restriction of $H^*$ acts, up to sign convention, as the scaled shape operator, and
$-\dev y^{*T}H^*\dev y^*$ is the associated curvature correction, closely related to the second fundamental form \citep{doCarmo1976}. At conic singularities, such as an SOC apex or PSD matrices with multiple zero eigenvalues, a smooth shape operator is undefined \citep{fukuda2023weak}. Our framework instead adopts the variational perspective of \citet{bonnans2000perturbation}, where the solution differential is restricted to the critical cone,
$\dev y^*\in\mathcal{C}_{\coneK}(y^*)$. Curvature therefore need only be characterized along these admissible directions, where it can be expressed through the support function of the corresponding second-order tangent set. This directional curvature admits the quadratic representation induced by $H^*$, while $C^*$ excludes singular directions along which the curvature is not well-defined (see Sections~\ref{sec:socp} and~\ref{sec:sdp}).
\smallskip
\\
(\textit{iii}) In practical applications, \eg, neural-network training, one need not solve \Eqref{eqn:reduced_conic_diff_kkt_matrix} explicitly to backpropagate gradients; instead, the required vector--Jacobian product can be computed by solving the adjoint system with the same symmetric coefficient matrix. The framework may also extend to nonconvex problems with locally convex geometry under suitable regularity conditions, although this extension is beyond the scope of this paper.

\section{Applications to NLP, QP, SOCP, and SDP}
\label{sec:instances}

To demonstrate the utility of Theorem~\ref{theorem:conic_program}, we explicitly instantiate the framework to derive reduced formulations for nonlinear programs (NLPs), quadratic programs (QPs) as a special case of NLPs, second-order cone programs (SOCPs), and semidefinite programs (SDPs). Theorem~\ref{theorem:instance_equivalence} below summarizes the resulting formulations for all these classes. Sections~\ref{sec:nlp}--\ref{sec:sdp} then provide the corresponding derivations, constructing the matrices $H^*$ and $C^*$ and capturing the relevant local cone geometry, including nonsmooth configurations at SOC apex blocks and SDP solutions with multiple zero eigenvalues.

\begin{theorem}%
\label{theorem:instance_equivalence}
Under the standing regularity assumptions of Section~\ref{sec:notations}, consider any of the NLP, QP, SOCP, or SDP problems in the table below. The reduced problem in each corresponding row is first-order equivalent to the original problem at the reference parameter $\theta_0$.

\newcommand{\smat}[2]{\bigl[\begin{smallmatrix}#1\\#2\end{smallmatrix}\bigr]}
\newcommand{\instancecell}[1]{%
  \raisebox{0pt}[\dimexpr\height+1.3ex\relax][\dimexpr\depth+1.3ex\relax]{#1}%
}
\newcommand{\instanceheader}[1]{%
  \raisebox{0pt}[\dimexpr\height+0.5ex\relax][\dimexpr\depth+0.5ex\relax]{#1}%
}

\newcommand{\conicCell}[1]{%
  \raisebox{0pt}%
    [\dimexpr\height+0.8ex\relax]%
    [\dimexpr\depth+0.8ex\relax]{#1}%
}

\newcommand{\conicLabel}[1]{%
  \makebox[\linewidth][c]{%
    \rotatebox[origin=c]{90}{\textnormal{\textbf{#1}}}%
  }%
}

\newcommand{\conicArray}[1]{%
  \begingroup
  \renewcommand{\arraystretch}{1.25}%
  \begin{adjustbox}{max width=\dimexpr\linewidth-3pt\relax}
  $\begin{array}{>{\displaystyle}r@{\ }>{\displaystyle}l}
  #1
  \end{array}$%
  \end{adjustbox}%
  \endgroup
}

\newcommand{\conicHeader}[1]{%
  \raisebox{0pt}%
    [\dimexpr\height+0.8ex\relax]%
    [\dimexpr\depth+0.8ex\relax]{\textnormal{\textbf{#1}}}%
}

\begin{adjustbox}{max width=\textwidth}
\begin{tabular}{|m{0.2cm}|m{4.5cm}|m{8.6cm}|}
\hline

&
\conicHeader{Original Problem} &
\conicHeader{Reduced Problem} \\
\hline

\conicCell{\conicLabel{NLP [\ref{sec:nlp}]}}
&
\conicCell{%
\conicArray{
\min_z \ \ \
  & f(z)\\
\st \ \ \
  & g_{j}(z)\le 0
}
}
&
\conicCell{%
\conicArray{
\min_z \ \ \
  & \nabla_z f^T z
    +\tfrac12\Deltaz ^TQ_{\mathrm{NLP}}\Deltaz \\
\st \ \ \
  & g_{j}(z^*)
    +\nabla_z g_{j}^T\Deltaz =0,
    \quad j\in J\\[2ex]
\text{with} \ \ \
  & Q_{\mathrm{NLP}}
    =\nabla_{zz}^2 f
    -\sum_{j\in J}
      \mu^{*}_{j}\nabla_{zz}^2g_{j}
}
}
\\
\hline

\conicCell{\conicLabel{QP [\ref{sec:qp}]}}
&
\conicCell{%
\conicArray{
\min_z \ \ \
  & q^T z + \frac{1}{2} z^T P z\\
\st \ \ \
  & c_j^T z \leq d_j
}
}
&
\conicCell{%
\conicArray{
\min_z \ \ \
  &  q^T z + \frac{1}{2} z^T P z \\
\st \ \ \
  & c_j^T z = d_j, \quad j\in J\\%
}
}
\\
\hline

\conicCell{\conicLabel{SOCP [\ref{sec:socp}]}}
&
\conicCell{%
\conicArray{
\min_z \ \ \
  & q^Tz\\
\st \ \ \
  & \|A_jz+b_j\|\le c_j^Tz+d_j
}
}
&
\conicCell{%
\conicArray{
\min_z \ \ \
  & q^Tz
    +\tfrac12\Deltaz ^TQ_{\mathrm{SOCP}}\Deltaz \\
\st \ \ \
  & \begin{alignedat}[t]{2}
    & \nabla \phi(G_j(z^*))^T (G_j(z)-y^*_j) = 0,
      &\quad & j\in \Jna,\\[.1ex]
    & G_j(z) = 0,
      &\quad & j\in \Jap
    \end{alignedat} \\[4ex]
\text{with} \ \ \
  & Q_{\mathrm{SOCP}}
    =\sum_{j\in \Jna}s_j^*
      \nabla G_j ^T
      \nabla^2\phi(y^*_j)
      \nabla G_j \\[3ex]
    & G_j(z):=\begin{bmatrix}A_j\\c_j^T\end{bmatrix}z
       +\begin{bmatrix}b_j\\d_j\end{bmatrix}, \quad \phi(x,t):=\|x\|_2-t
}

}
\\
\hline

\conicCell{\conicLabel{SDP [\ref{sec:sdp}]}}
&
\conicCell{%
\conicArray{
\min_{Z=Z^T}
  & \ip{C,Z}\\
\st \ \ \
  & \ip{A_i,Z}=B_i,\\
  & Z\succeq 0
}
}
&
\conicCell{%
\conicArray{
\min_{Z=Z^T}
  & \ip{C,Z}
    +\tfrac12
      \vecm{\DeltaZ}^{\!T}
      Q_{\mathrm{SDP}}
      \vecm{\DeltaZ}\\
\st \ \ \
  & \ip{A_i,Z}=B_i\\[.2ex]
  & \ip{u_j^*u_{j'}^{*T},Z}=0,
    \quad j,j'\in J,\ j'\le j\\[2ex]
\text{with} \ \ \
  & Q_{\mathrm{SDP}}
    =Z^{*\dagger}\otimes S^*
     +S^*\otimes Z^{*\dagger}
}

}
\\
\hline

\end{tabular}
\end{adjustbox}

where $\Deltaz :=z-z^*$ and $\DeltaZ :=Z-Z^*$. For NLP and QP, $j$ indexes the inequality constraints and $J$ is the active subset. For SOCP, $j$ indexes the SOC blocks, while $\Jna $ and $\Jap $ denote the non-apex and apex
active subsets, respectively. For SDP, $i$ indexes the affine constraints, while $\{u_j^*\}_{j\in J}$ is an orthonormal basis for $\ker(Z^*)$. For compactness, we suppress the dependence of data on $\theta$.

\end{theorem}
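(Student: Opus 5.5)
The plan is to derive each row of the table as an instance of Theorem~\ref{theorem:conic_program}. For each problem class I will identify $(f_\theta,G_\theta,h_\theta,\coneK)$ and exhibit a compatible pair $(C^*_\theta,H^*)$. That means checking three things: $C^*_\theta$ is smooth in $\theta$ with full row rank; $\ker C^*=\mathcal C_{\coneK}(y^*)$; and $\dev\mu^*-H^*\dev y^*\in\mathcal C_{\coneK}(y^*)^\perp$ for every solution differential. I will then substitute the pair into Problem~\ref{prob:reduced_conic} and simplify the result to the form in the table.

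Two simplifications are used throughout. First, whenever $C^*_\theta$ is built from the solution geometry at $\theta_0$, so that it is constant in $\theta$, the term $C^*(G_\theta(z^*)-G_{\theta_0}(z^*))$ combines with the linearization. For affine $G$ this gives exactly $C^*(G_\theta(z)-y^*)$. Second, adding a constant or a multiple of the constraint rows to the objective does not change the minimizer. This lets me replace $\nabla_z f^T\Deltaz$ by $\nabla_z f^Tz$, or by the original linear/quadratic objective.

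\emph{NLP and QP.} Take $\coneK=\mathbb R^m_{-}$ (componentwise, $G=g$), with the sign convention absorbed into $\mu$. By strict complementarity, the critical cone is $\{d:d_j=0,\ j\in J\}$. So $C^*$ is the row selector $E_J$ and $H^*=0$. The compatibility condition reduces to checking that $\dev\mu^*_j=0$ for $j\notin J$, which holds because inactive multipliers vanish in a neighborhood of $\theta_0$. With $H^*=0$, the quadratic term is $\nabla^2_{zz}\mathcal L=\nabla^2 f-\sum_{j\in J}\mu^*_j\nabla^2 g_j$, which gives $Q_{\mathrm{NLP}}$. In the QP case the constraint linearization is exact, and $\nabla f^T\Deltaz+\tfrac12\Deltaz^TP\Deltaz$ equals $q^Tz+\tfrac12 z^TPz$ up to a constant.

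\emph{SOCP.} The cone is a product, so $C^*$ and $H^*$ are assembled blockwise. I split each block into three cases.
\begin{itemize}
\item Inactive blocks ($y^*_j\in\mathrm{int}$): then $\mu^*_j=0$, and I take an empty $C$-block and $H_j=0$.
\item Apex blocks ($y^*_j=0$): strict complementarity puts $\mu^*_j$ in the interior of the dual cone, so the critical cone is $\{0\}$. I take $C_j=I$ and $H_j=0$, which gives the constraint $G_j(z)=0$.
\item Non-apex boundary blocks: the boundary is locally $\{\phi=0\}$, with $\phi$ smooth there. Complementarity gives $\mu^*_j=-s^*_j\nabla\phi(y^*_j)$ with $s^*_j>0$. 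The critical cone is $\nabla\phi(y^*_j)^\perp$, so I take $C_j=\nabla\phi(y^*_j)^T$.
\end{itemize}
For the non-apex blocks, differentiating $\mu_j(\theta)=-s_j(\theta)\nabla\phi(y_j(\theta))$ gives
\[
\dev\mu^*_j=-s^*_j\nabla^2\phi(y^*_j)\dev y^*_j+(\cdot)\nabla\phi(y^*_j).
\]
The last term lies in the span of $\nabla\phi(y^*_j)$, which is $\mathcal C^\perp$. So $H_j=-s^*_j\nabla^2\phi(y^*_j)$, and then $-\nabla G^TH^*\nabla G=Q_{\mathrm{SOCP}}$. Since $\mathcal L$ is affine in $z$, $\nabla^2_{zz}\mathcal L=0$.

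One subtlety is that $C_j$ depends on $\theta_0$ only, so it is constant in $\theta$. I need to confirm that the theorem permits this choice. The kernel condition is imposed only at $\theta_0$, and the first-order terms agree, so this should be fine.

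\emph{SDP.} Write $Z^*=U\Lambda U^T$ and $S^*$ for the dual slack, with $Z^*S^*=0$. Strict complementarity gives $Z^*+S^*\succ0$. The critical cone is $\{D: U_0^TDU_0=0\}$, where $U_0=[u_j^*]_{j\in J}$ spans $\ker Z^*$. The rows of $C^*$ are the functionals $\ip{u_ju_{j'}^T,\cdot}$ for $j'\le j$. Using symmetry, these are independent and have full row rank.

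For $H^*$, I differentiate $ZS=0$ to get $\dev Z\,S^*+Z^*\dev S=0$. On the range of $Z^*$, this yields $\dev S$ in terms of $\dev Z$. Symmetrizing, the components of $\dev S$ outside the $U_0U_0^T$ block are $-(Z^{*\dagger}\dev Z\,S^*+S^*\dev Z\,Z^{*\dagger})$, which is the same as $-(Z^{*\dagger}\otimes S^*+S^*\otimes Z^{*\dagger})\vecm{\dev Z}$ up to a Kronecker convention. The remaining $U_0^T\dev S\,U_0$ block lies in $\mathcal C^\perp$. So $H^*=-(Z^{*\dagger}\otimes S^*+S^*\otimes Z^{*\dagger})$, with the sign fixed by the Lagrangian convention ($G=\mathrm{id}$, $\mu=S$), and $Q_{\mathrm{SDP}}=-H^*$. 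The constraint $\ip{u_ju_{j'}^T,Z-Z^*}=0$ becomes $\ip{u_ju_{j'}^T,Z}=0$, because $Z^*u_j=0$.

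I expect the SDP curvature step to be the main obstacle. The difficulty is verifying carefully that the residual $\dev S-H^*\dev Z$ lies exactly in $\mathcal C^\perp=\mathrm{span}\{U_0MU_0^T\}$. In particular, the cross terms between $\ker Z^*$ and $\mathrm{range}\,Z^*$, which involve $Z^{*\dagger}$ on one side and $S^*$ on the other, must be matched correctly, and this has to hold even when $\ker Z^*$ has dimension greater than one. I would verify it in the eigenbasis $[U_+,U_0]$ block by block: the $++$ block, the $+0$ block, and the $00$ block. In that basis $Z^{*\dagger}$ and $S^*$ are block diagonal with disjoint supports, so the check reduces to diagonal algebra.
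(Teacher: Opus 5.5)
Your NLP/QP, inactive- and apex-block, and SDP arguments agree with the paper's. The SDP argument in particular is right: differentiating complementarity blockwise in the eigenbasis gives $H^*=-(Z^{*\dagger}\otimes S^*+S^*\otimes Z^{*\dagger})$. The gap is the non-apex SOC case, at exactly the subtlety you flagged. Freezing $C_j=\nabla\phi(y_j^*)^T$ in $\theta$ is not allowed, and ``the first-order terms agree'' is false.

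The three properties you list do not exhaust ``compatible pair''. The proof of Theorem~\ref{theorem:conic_program} also needs $\dev C_\theta^{*T}\hat\mu(\theta_0)=H^*\nabla_\theta G_{\theta_0}(z^*)\,\dev\theta$. Differentiating the original KKT system with $\dev\mu^*=H^*\dev y^*+C^{*T}\dev\hat\mu$ and $\dev y^*=\nabla_zG\,\dev z^*+\nabla_\theta G\,\dev\theta$ produces the term $-\nabla_zG^TH^*\nabla_\theta G\,\dev\theta$ in $\rho_{\theta_0}$. In the reduced problem, the only source of this term is differentiating $\nabla_zG_\theta^TC_\theta^{*T}\hat\mu$ through the $\theta$-dependence of $C_\theta^*$. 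The quadratic term cannot supply it, because it multiplies $\Deltaz=0$ at $\theta_0$. With a constant $C_j$ the term is absent, so your reduced SOCP has the wrong derivative in general, for example when $A_j$ or $b_j$ of a non-apex block vary with $\theta$. Freezing is harmless only when $H^*\nabla_\theta G=0$: NLP ($H^*=0$), apex blocks ($H_j^*=0$), and SDP ($G=\mathrm{id}$). That is why the rest of your argument goes through.

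A concrete check: take $\min_x q^Tx$ s.t. $\|x-b\|_2\le 1$ with $\theta=b$, and set $e:=q/\|q\|$ and $\bar x:=x-x^*$. Then $x^*=b-e$, so $\dev x^*=\dev b$. Here $s^*=\|q\|$ and $Q_{\mathrm{SOCP}}=\|q\|(I-ee^T)$. Your frozen constraint reduces to $e^T\bar x=e^T\dev b$. Minimizing $\|q\|e^T\bar x+\tfrac12\|q\|\bar x^T(I-ee^T)\bar x$ under it gives $\bar x=ee^T\dev b$, which loses the tangential component $(I-ee^T)\dev b$.

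The fix is to read the table's constraint literally: $C_{j,\theta}^*=\nabla\phi(G_{j,\theta}(z^*))^T$, with the reference $z^*$ held fixed but the data varying. Only $y_j^*$ is frozen. Since $\mu_j^*=C_j^{*T}\hat\mu_j$ with $\hat\mu_j(\theta_0)=-s_j^*$, the chain rule gives $\dev C_{j,\theta}^{*T}\hat\mu_j(\theta_0)=-s_j^*\nabla^2\phi(y_j^*)\nabla_\theta G_j\,\dev\theta=H_j^*\nabla_\theta G_j\,\dev\theta$, which is exactly the missing identity. With this rotating normal, the example returns $\dev\hat x=\dev b$. The paper obtains the identity uniformly by taking $C(y):=\Xi'(y)$ from the $\mathcal C^2$-reduction and setting $C_\theta^*:=C(G_\theta(z^*))$.
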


\subsection{Nonlinear Programs}
\label{sec:nlp}
Consider the NLP in Theorem~\ref{theorem:instance_equivalence}, where $f_{\theta}(z), g_{j,\theta}(z): \mathbb{R}^{n} \times \mathbb{R}^{s}\rightarrow \mathbb{R}$ and their Jacobians are jointly $\mathcal{C}^1$ in $(z,\theta)$. The inequality constraints can be represented by the non-positive orthant $\mathcal{K} = \mathbb{R}_-^m$. Denote by $J:=\{j:y^*_{j} = g_{j,\theta_0}(z^*)=0\}$ the active set at the reference solution. Its critical cone is given by $
\mathcal C_{\mathcal K}(y^*)
=
\{\mathrm dy\in\mathbb R^m:\mathrm dy_j=0,\ j\in J\}
$, implying that perturbations must keep active constraints tight to zero. Consequently, $C^* = C_\theta^*$ is the coordinate-selection matrix associated with $J$. Since the boundary of the nonpositive orthant is locally polyhedral and hence has no curvature, $H^*=0$. Substituting these expressions into
Theorem~\ref{theorem:conic_program} yields the reduced NLP in Theorem~\ref{theorem:instance_equivalence}, recovering the classical active-set reduction for nonlinear programs \citep{fiacco1983introduction}.

\subsection{Quadratic Programming}
\label{sec:qp}

The case of NLP with $f_{\theta}(z) = q_{\theta}^T z + \frac{1}{2} z^T P_{\theta} z$ and $g_{j,\theta}(z) = c_{j,\theta}^T z - d_{j,\theta}$ readily recovers the reduced problem for the case of QP derived in  \citet{magoon2026differentiation}.

\subsection{Second-order Cone Programs}
\label{sec:socp}

The $j$-th second-order cone (SOC) constraint in Theorem~\ref{theorem:instance_equivalence} takes the form
\[
y_j\in\mathcal K_{\mathrm{SOC}_j}:=\{(x,t):\phi(x,t)\leq0\},
\]
where $y_j=G_j(z):=(A_jz+b_j,c_j^Tz+d_j)$ and $\phi(x,t):=\|x\|_2-t$. Recall that $y_j^*$ is the fixed value of $G_j(z^*)$ at the reference problem data.
Let $\mu_j=(u_j,s_j)$ denote the corresponding dual variable. Under our assumptions, each SOC block at the solution is either inactive, with $y_j^*\in\operatorname{int}\mathcal K_{\mathrm{SOC}_j}$, or active, with non-apex and apex index sets
\[
\begin{aligned}
\Jna&:=\{j:y_j^*\in\partial\mathcal K_{\mathrm{SOC}_j}\setminus\{0\},
\ \mu_j^*\in\partial\mathcal K_{\mathrm{SOC}_j}^*\setminus\{0\}\},\\
\Jap&:=\{j:y_j^*=0,
\ \mu_j^*\in\operatorname{int}(\mathcal K_{\mathrm{SOC}_j}^*)\}.
\end{aligned}
\]
The critical cones for active blocks are
\begin{equation}
\mathcal C_{\mathcal K_{\mathrm{SOC}_j}}(y_j^*)
=
\begin{cases}
\{\mathrm dy:\nabla\phi(y_j^*)^T\mathrm dy=0\}, & j\in\Jna,\\
\{0\}, & j\in\Jap.
\end{cases}
\label{eq:socp_critical_cone}
\end{equation}
Thus, first-order solution perturbations lie in the tangent hyperplane for non-apex blocks and vanish for apex blocks. These restrictions and the cone's local curvature determine $C_{j,\theta}^*$ and $H_j^*$:
\begin{proposition}\label{lemma:socp_H_C}
For each active SOC block,
\[
(H_j^*,C_{j,\theta}^*)=
\begin{cases}
\left(-s_j^*\nabla^2\phi(y_j^*),
\nabla\phi(G_j(z^*))^T\right), & j\in\Jna,\\
(0,I), & j\in\Jap.
\end{cases}
\]
\end{proposition}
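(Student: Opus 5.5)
We need to verify, block by block, the two requirements that Theorem~\ref{theorem:conic_program} places on the compatible pair $(H_j^*,C_{j,\theta}^*)$. First, $C_{j,\theta}^*$ must vary smoothly in $\theta$, have full row rank, and satisfy $\ker C_j^*=\mathcal C_{\mathcal K_{\mathrm{SOC}_j}}(y_j^*)$. Second, for every solution differential we need $\mathrm d\mu_j^*-H_j^*\mathrm dy_j^*\in\mathcal C_{\mathcal K_{\mathrm{SOC}_j}}(y_j^*)^\perp$. Because $\mathcal K$ is a product of the SOC blocks (and of the orthant-type inactive blocks), the critical cone is the product of the blockwise critical cones. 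The global $C^*$ and $H^*$ can then be assembled block-diagonally, so each block can be treated separately. For inactive blocks, strict complementarity gives $\mu_j^*=0$ on a neighborhood, and we may take $H_j^*=0$ with no rows in $C^*$.

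\emph{Apex blocks} $j\in\Jap$. By \eqref{eq:socp_critical_cone} the critical cone is $\{0\}$. Hence $C_{j,\theta}^*=I$ is constant, smooth and full rank, with $\ker I=\{0\}$. The orthogonal complement of $\{0\}$ is the whole space, so the residual condition holds trivially for any $H_j^*$, and $H_j^*=0$ is admissible.

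\emph{Non-apex blocks} $j\in\Jna$. Here $\phi$ is smooth near $y_j^*\neq0$ (because $x_j^*\neq 0$), and $\nabla\phi(G_{j,\theta}(z^*))=(x/\|x\|,-1)$ is nonzero and depends smoothly on $\theta$. The row vector $C_{j,\theta}^*=\nabla\phi(G_j(z^*))^T$ therefore has full row rank, and by \eqref{eq:socp_critical_cone} its kernel at $\theta_0$ is the critical cone. For the residual condition, note that under strict complementarity and nondegeneracy the solution map is differentiable and the block stays non-apex on the boundary for $\theta$ near $\theta_0$. Complementarity at a non-apex boundary point forces the dual to be a nonnegative multiple of the outward normal, so
\[
\mu_j^*(\theta)=-s_j^*(\theta)\,\nabla\phi(y_j^*(\theta)),
\]
with $s_j^*(\theta)>0$ the scalar dual component. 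To confirm the sign, observe that $-\nabla\phi=(-x/\|x\|,1)$, whose last component matches $s$; this is consistent with the requirement that $\mu$ lie in the dual cone, which is the SOC itself. Differentiating gives
\[
\mathrm d\mu_j^*=-s_j^*\nabla^2\phi(y_j^*)\,\mathrm dy_j^*-\mathrm ds_j^*\,\nabla\phi(y_j^*).
\]
The second term is a multiple of $\nabla\phi(y_j^*)$, which spans $\mathcal C_{\mathcal K_{\mathrm{SOC}_j}}(y_j^*)^\perp$ because the critical cone is the hyperplane $\nabla\phi(y_j^*)^\perp$. It follows that $\mathrm d\mu_j^*-H_j^*\mathrm dy_j^*\in\mathcal C^\perp$ with $H_j^*=-s_j^*\nabla^2\phi(y_j^*)$. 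This $H_j^*$ is symmetric, as required.

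\textbf{Main obstacle.} The delicate step is justifying the local representation $\mu_j^*(\theta)=-s_j^*(\theta)\nabla\phi(y_j^*(\theta))$, together with its differentiability, on a neighborhood of $\theta_0$. I would first invoke differentiability of the solution map under the standing assumptions of Section~\ref{sec:notations}, which is the same ingredient used in Theorem~\ref{theorem:conic_program}. I would then use continuity and strict complementarity to argue that $y_j^*(\theta)$ remains on $\partial\mathcal K\setminus\{0\}$ and $\mu_j^*(\theta)$ remains on $\partial\mathcal K^*\setminus\{0\}$. From there, the SOC complementarity characterization (two nonzero boundary vectors with zero inner product are reflections up to scaling) yields the formula with $s_j^*(\theta)=\mu_{j,t}^*(\theta)$, which is smooth.
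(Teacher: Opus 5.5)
Your computations are right and you land on the correct pair, but the criterion you verify against is incomplete. As written, the argument therefore does not establish what the lemma is used for.

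\textbf{What is missing.} The four properties you check are smoothness, full row rank, $\ker C^*=\mathcal C_{\coneK}(y^*)$, and the residual condition $\mathrm d\mu^*-H^*\mathrm dy^*\in\mathcal C_{\coneK}(y^*)^\perp$. Together they only pin down how $H^*$ acts on the critical cone, modulo its orthogonal complement. They do not link $H^*$ to the $\theta$-dependence of $C_\theta^*$.

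The proof of Theorem~\ref{theorem:conic_program} also uses the compatibility identity~(\ref{eqn:dC_H}), $\mathrm dC_\theta^{*T}\hat\mu(\theta_0)=H^*\nabla_\theta G_{\theta_0}\mathrm d\theta$. The theorem statement lists only your properties, but its proof relies on this third one. Differentiating the reduced KKT system produces the term $\nabla_zG^T(\mathrm dC_\theta^{*T})\hat\mu(\theta_0)$. This term must reproduce $\nabla_zG^TH^*\nabla_\theta G\,\mathrm d\theta$ from the original differential, and $\nabla_\theta G\,\mathrm d\theta$ need not lie in the critical cone, so your residual condition says nothing about it.

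\textbf{Why it matters.} Your apex remark exposes the issue. Take $C_{j,\theta}^*\equiv I$ and $H_j^*\neq0$. The reduced $\hat z$ is unaffected, since the $H_j^*$ term is constant on $\{G_j(z)=0\}$. But by nondegeneracy the reduced multiplier satisfies $\mathrm d\hat\mu_j=\mathrm d\mu_j^*+H_j^*\nabla_\theta G_j\,\mathrm d\theta$, so the conclusion $\mathrm d\mu^*=H^*\mathrm dy^*+C^{*T}\mathrm d\hat\mu$ fails. Hence $H_j^*=0$ is forced, not merely admissible.

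The same happens at a non-apex block. The matrix $H_j^*+\lambda(\nabla\phi\,w^T+w\,\nabla\phi^T)$, with $\nabla\phi=\nabla\phi(y_j^*)$, passes both of your tests. Paired with $C_{j,\theta}^*=\nabla\phi(G_j(z^*))^T$, however, it adds the forcing term $\lambda(\nabla\phi^T\nabla_\theta G_j\,\mathrm d\theta)\nabla_zG_j^Tw$ to the differentiated stationarity condition, which in general shifts $\mathrm d\hat z$.

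\textbf{The fix.} It takes one line per case. At a non-apex block, $\mu_j^*=C_j^{*T}\hat\mu_j(\theta_0)=\nabla\phi(y_j^*)\hat\mu_j(\theta_0)$ gives $\hat\mu_j(\theta_0)=-s_j^*$. Since $\mathrm dC_{j,\theta}^{*T}=\nabla^2\phi(y_j^*)\nabla_\theta G_j\,\mathrm d\theta$, it follows that $\mathrm dC_{j,\theta}^{*T}\hat\mu_j(\theta_0)=-s_j^*\nabla^2\phi(y_j^*)\nabla_\theta G_j\,\mathrm d\theta=H_j^*\nabla_\theta G_j\,\mathrm d\theta$. At an apex block, both sides vanish because $C_{j,\theta}^*$ is constant and $H_j^*=0$.

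\textbf{Comparison with the paper.} The paper avoids the issue by instantiating the $\mathcal C^2$-cone reduction of Appendix~\ref{app:proof_conic_program}. It takes $\Xi_j=\phi$ into $\mathbb R_-$ at non-apex blocks and $\Xi_j=\mathrm{id}$ into $\mathcal K_{\mathrm{SOC}_j}$ at the apex. It then reads off $C=\Xi'$ and $H^*=-\nabla^2\langle u^*,\Xi\rangle(y^*)$, so both (\ref{eqn:dmu}) and (\ref{eqn:dC_H}) hold by construction. Your hands-on derivation of $\mu_j^*(\theta)=-s_j^*(\theta)\nabla\phi(y_j^*(\theta))$, via persistence of the non-apex configuration, is a fine substitute for the normal-cone reduction formula. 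With the compatibility check added, your proof is complete.
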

The derivation appears in Appendix~\ref{app:socp}. Substituting these expressions into Theorem~\ref{theorem:conic_program} yields the reduced SOCP in Theorem~\ref{theorem:instance_equivalence}.

\textbf{Remark.} Beyond giving a closed-form reduction, this specialization illustrates the
local smoothness requirement underlying Theorem~\ref{theorem:conic_program}.
The cone boundary need not admit a smooth representation in every ambient
direction; its curvature is required only along admissible directions in the
critical cone. Although an SOC constraint can be written as the scalar inequality $\phi(y_j)\leq0$, this representation is nonsmooth at the apex $y_j^*=0$, so direct differentiation as a smooth NLP is not applicable at such configurations. By~\Eqref{eq:socp_critical_cone}, however, the critical
cone at the apex is $\{0\}$, so there is no nonzero admissible direction along
which curvature must be evaluated, resulting in a well-defined reduction. %

\subsection{Semidefinite Programs}
\label{sec:sdp}
Let $Z\in\mathbb S^n$ and $S\in\mathbb S^n$ be the primal and dual matrices associated with the constraint $Z\succeq0$ of the SDP in Theorem~\ref{theorem:instance_equivalence}.
To describe the local geometry of the PSD cone, we write the compact eigendecomposition $Z^* = U_1^*\Lambda_1^*U_1^{*T}$ with $\Lambda_1^*\succ0$, and let $U_0^*$ be an orthonormal basis for $\ker(Z^*)$ with columns $u_j^*$ indexed by $j \in J$. Under our assumptions, the critical cone is
\begin{equation}
\mathcal C_{\mathbb S_+^n}(Z^*)
=\{\mathrm dZ\in\mathbb S^n:U_0^{*T}\mathrm dZ\,U_0^*=0\}
=\{\mathrm dZ:u_j^{*T}\mathrm dZ\,u_{j'}^*=0,\ j,j'\in J\}.
\label{eqn:sdp_critical_cone}
\end{equation}
Thus, admissible first-order perturbations cannot modify the block of $Z^*$
supported on its zero-eigenspace. This restriction determines $C^* = C_\theta^*$, while
the curvature of the PSD cone along the remaining admissible directions
determines $H^*$ as follows:

\begin{proposition}\label{lemma:sdp_H_C}
The differential of the dual matrix admits the vector-form
decomposition
\[
\vecm{\mathrm dS^*}=H^*\vecm{\mathrm dZ^*}
+C^{*T}\mathrm d\hat\mu,
\qquad
H^*=-\left(Z^{*\dagger}\otimes S^*+S^*\otimes Z^{*\dagger}\right),
\]
where the rows of $C^*$ are $\vecm{u_j^*u_j^{*T}}^T$ and $\vecm{u_j^*u_{j'}^{*T}+u_{j'}^*u_j^{*T}}^T$ for $j,j'\in J, j'<j$.
\end{proposition}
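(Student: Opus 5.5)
We prove the lemma by differentiating the complementarity relations of the SDP at the reference solution and splitting the result into two parts. The first part is the component of $\mathrm dS^*$ along the critical subspace, which $H^*\vecm{\mathrm dZ^*}$ must reproduce. The second part is a residual lying in $\mathcal C_{\mathbb S^n_+}(Z^*)^\perp$, which will be written as $C^{*T}\mathrm d\hat\mu$.

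\textbf{Block structure of the solution.} Write $U^*=[U_1^*\ U_0^*]$. Under strict complementarity and $Z^*S^*=0$, we have $S^*=U_0^*\Sigma_0^*U_0^{*T}$ with $\Sigma_0^*\succ0$, and $\operatorname{rank}Z^*+\operatorname{rank}S^*=n$. For any matrix $M$, write $M_{ab}:=U_a^{*T}MU_b^*$ for $a,b\in\{0,1\}$.

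\textbf{Differentiating complementarity.} Strict complementarity and nondegeneracy make the solution map differentiable, so we may differentiate $Z^*(\theta)S^*(\theta)=0$ to get
\[
\mathrm dZ^*\,S^*+Z^*\,\mathrm dS^*=0 .
\]
Taking the $(1,0)$ block gives $\mathrm dZ^*_{10}\Sigma_0^*+\Lambda_1^*\,\mathrm dS^*_{10}=0$, so $\mathrm dS^*_{10}=-\Lambda_1^{*-1}\mathrm dZ^*_{10}\Sigma_0^*$. The $(1,1)$ block gives $\Lambda_1^*\mathrm dS^*_{11}=0$, so $\mathrm dS^*_{11}=0$. The $(0,0)$ block of $\mathrm dS^*$ is left unconstrained by this equation. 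By symmetry, the $(0,1)$ block is the transpose of the $(1,0)$ block.

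\textbf{Matching the curvature term.} Next we compute $H^*\vecm{\mathrm dZ^*}$. The Kronecker identity $(A\otimes B)\vecm X=\vecm{BXA^T}$ gives
\[
H^*\vecm{\mathrm dZ^*}=-\vecm{S^*\,\mathrm dZ^*\,Z^{*\dagger}+Z^{*\dagger}\,\mathrm dZ^*\,S^*}.
\]
Substitute $Z^{*\dagger}=U_1^*\Lambda_1^{*-1}U_1^{*T}$ and compute the blocks.
\begin{itemize}
\item The $(1,0)$ block is $-\Lambda_1^{*-1}\mathrm dZ^*_{10}\Sigma_0^*$, matching $\mathrm dS^*_{10}$. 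The $(0,1)$ block matches by symmetry.
\item The $(1,1)$ block is $0$, matching $\mathrm dS^*_{11}=0$.
\item The $(0,0)$ block is also $0$, since each term contains the product $U_0^{*T}U_1^*=0$.
\end{itemize}
Hence $\mathrm dS^*-\mathrm{mat}(H^*\vecm{\mathrm dZ^*})$ is supported only on its $(0,0)$ block. That is, it has the form $U_0^*MU_0^{*T}$ for some symmetric $M$. One point needs care: the vec/Kronecker identity must be applied consistently with symmetric $\mathrm dZ^*$, and we should check that $H^*$ is symmetric. Symmetry holds because $Z^{*\dagger}\otimes S^*$ and $S^*\otimes Z^{*\dagger}$ are symmetric and sum to a commutation-invariant operator.

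\textbf{Identifying the residual with the row space of $C^*$.} The matrices $U_0^*MU_0^{*T}$ with $M$ symmetric form exactly the orthogonal complement, in $\mathbb S^n$, of $\mathcal C_{\mathbb S_+^n}(Z^*)$ from \eqref{eqn:sdp_critical_cone}. They are spanned by $u_j^*u_j^{*T}$ for $j\in J$ and by $u_j^*u_{j'}^{*T}+u_{j'}^*u_j^{*T}$ for $j'<j$. Therefore
\[
\vecm{U_0^*MU_0^{*T}}=C^{*T}\mathrm d\hat\mu,
\]
with coordinates $\mathrm d\hat\mu_{jj}=M_{jj}$ and $\mathrm d\hat\mu_{jj'}=M_{jj'}$ for $j'<j$. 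These rows are linearly independent, so $C^*$ has full row rank, and $\ker C^*=\mathcal C_{\mathbb S^n_+}(Z^*)$, as Theorem~\ref{theorem:conic_program} requires.

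\textbf{Smooth dependence of $C^*_\theta$.} For the $\theta$-dependence, take $C^*_\theta=C^*$ fixed; no more is needed here, since $G$ is the identity map. The claimed decomposition follows. The main obstacle is the block bookkeeping in the matching step, namely verifying that the Kronecker form reproduces exactly the off-diagonal blocks and contributes nothing to the $(0,0)$ block.
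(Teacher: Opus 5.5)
Your proof is correct and follows essentially the same route as the paper. You differentiate the complementarity relation $Z^*S^*=0$, block-decompose in the eigenbasis of $Z^*$, solve the off-diagonal block as $\mathrm dS^*_{10}=-\Lambda_1^{*-1}\mathrm dZ^*_{10}\Sigma_0^*$, identify this with the Kronecker term $H^*\vecm{\mathrm dZ^*}$, and expand the leftover zero-eigenspace block in the symmetric basis $u_j^*u_j^{*T}$, $u_j^*u_{j'}^{*T}+u_{j'}^*u_j^{*T}$. The differences are cosmetic: you get $\mathrm dS^*_{11}=0$ from the $(1,1)$ block of the differentiated equation rather than from the dual critical-cone lemma, you never need $\mathrm dZ^*_{00}=0$, and you verify the Kronecker rewriting block by block where the paper just says ``after some rewriting.''
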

A detailed derivation is deferred to Appendix~\ref{app:sdp}. Substituting these expressions into Theorem~\ref{theorem:conic_program} yields the reduced SDP in
Theorem~\ref{theorem:instance_equivalence}.

\textbf{Remark.} This instance further illustrates that differentiability does not require cone curvature to be well defined in all directions, but only along admissible directions in the critical cone. In the SDP case, nonsmoothness arises when perturbations change the zero-eigenspace block of $Z^*$. The critical-cone condition $U_0^{*T}\dev Z^* U_0^*=0$ removes exactly those directions; see Appendix~\ref{app:sdp} for an illustrative example. Therefore, the reduced problem remains well defined even when $Z^*$ has multiple zero eigenvalues.

\section{Experiments}
\label{sec:experiments}
We evaluate dOPT through two experiments. First, we assess gradient accuracy and computational scalability on synthetic SOCP and SDP instances. Second, we demonstrate dOPT in an end-to-end learning benchmark with a differentiable SDP layer, focusing on backward computational cost during training.
As baselines, we use CVXPYLayers (with diffCP as its backend)~\citep{cvxpylayers2019,agrawal2019diffcp} and FFOLayer~\citep{zhao2025fully}. To our knowledge, these are the closest publicly available differentiation frameworks applicable to general conic programs and supporting evaluation on both SOCP and SDP instances\footnote{We excluded Moreau~\citep{moreau2026} due to SDP failure, DiffOpt.jl~\citep{besanccon2024flexible} for lack of native PyTorch integration, and diffQCP~\citep{healey2025differentiating} due to inconsistent gradients across solvers.}.
In our experiments, both dOPT and FFOLayer solve the forward problems using MOSEK~\citep{andersen2000mosek}, while CVXPYLayers uses SCS~\citep{scs}. Our comparisons focus primarily on backward computational cost.

\subsection{Synthetic Evaluation}\label{sec:synthetic_experiments}

\begin{table}[tbp]
\begin{center}
\caption{Synthetic SOCP and SDP performance and scalability.}
\vspace{.2em}
\label{tab:random_conic}
\setlength{\tabcolsep}{4pt}
\renewcommand{\arraystretch}{0.96}
\resizebox{\textwidth}{!}{%
\begin{tabular}{llcccccccc}
\toprule
Method & Metric & \multicolumn{4}{c}{\textbf{SOCP}} & \multicolumn{4}{c}{\textbf{SDP}} \\
\cmidrule(lr){3-6}\cmidrule(lr){7-10}
& Problem size & 20 & 100 & 500 & 700 & $20\times20$ & $50\times50$ & $100\times100$ & $150\times150$ \\
\midrule
\multirow{4}{*}{dOPT (MOSEK)}
& Accuracy & $\mathbf{2.14\times10^{-8}}$ & $1.28\times10^{-7}$ & $3.06\times10^{-6}$ & $\mathbf{2.26\times10^{-5}}$ & $\mathbf{1.52\times10^{-8}}$ & $\mathbf{5.16\times10^{-9}}$ & $\mathbf{2.27\times10^{-9}}$ & $\mathbf{7.32\times10^{-9}}$ \\
& Forward [ms] & 14.3 & 260.1 & 64055.1 & \textbf{346639.9} & \textbf{24.5} & \textbf{615.4} & 15040.9 & \textbf{142613.1} \\
& Backward [ms] & \textbf{1.6} & \textbf{7.2} & \textbf{184.0} & \textbf{1442.8} & \textbf{3.4} & \textbf{193.2} & \textbf{6946.4} & \textbf{82890.7} \\
& Total [ms] & 15.9 & \textbf{267.3} & \textbf{64239.1} & \textbf{348082.7} & \textbf{27.9} & \textbf{808.6} & \textbf{21987.3} & \textbf{225503.8} \\
\midrule
\multirow{4}{*}{CVXPYLayers (SCS)}
& Accuracy & $2.15\times10^{-8}$ & $\mathbf{3.04\times10^{-8}}$ & $\mathbf{1.15\times10^{-6}}$ & Failed & $1.80\times10^{-6}$ & $2.57\times10^{-6}$ & $2.58\times10^{-6}$ & Failed \\
& Forward [ms] & \textbf{2.0} & \textbf{152.7} & 160960.3 & -- & 29.4 & 934.6 & 39438.4 & -- \\
& Backward [ms] & 1.9 & 773.0 & 83147.4 & -- & 69.2 & 6761.2 & 115012.5 & -- \\
& Total [ms] & \textbf{3.9} & 925.7 & 244107.7 & -- & 98.6 & 7695.8 & 154450.9 & -- \\
\midrule
\multirow{4}{*}{FFOLayer (MOSEK)}
& Accuracy & $\mathbf{2.14\times10^{-8}}$ & $1.22\times10^{-7\,*}$ & $2.00\times10^{-6\,*}$ & Failed & $\mathbf{1.52\times10^{-8}}$ & $\mathbf{5.16\times10^{-9}}$ & $\mathbf{2.27\times10^{-9}}$ & $\mathbf{7.32\times10^{-9}}$ \\
& Forward [ms] & 16.4 & 268.6 & \textbf{63283.2} & -- & 25.4 & 638.9 & \textbf{14551.6} & 148793.2 \\
& Backward [ms] & 31.3 & 334.8 & 46807.9 & -- & 25.6 & 689.4 & 15968.4 & 199709.1 \\
& Total [ms] & 47.7 & 603.5 & 110091.2 & -- & 51.1 & 1328.4 & 30520.0 & 348502.3 \\
\bottomrule
\end{tabular}%
}
\end{center}
\vspace{0pt}
{\scriptsize $^*$ Some FFOLayer backward passes errored; results are averaged over successful runs: 9/10 instances at $n=100$ and 6/10 at $n=500$.}
\end{table}

We evaluate dOPT on randomly generated SOCP and SDP instances of increasing size, averaging over ten instances per setting. We report forward and backward runtimes and forward-solution accuracy, measured by the duality gap. Problem-generation and implementation details are provided in Appendix~\ref{app:experiment_details}. Gradient validation in Appendix~\ref{app:gradient_accuracy} shows that optimal-value gradients computed by dOPT agree with analytic gradients from the envelope theorem \citep{afriat1971theory} to relative errors of approximately $10^{-9}$ for SOCPs and $10^{-7}$ for SDPs.

Table~\ref{tab:random_conic} summarizes performance and scalability. dOPT has the lowest backward runtime at every tested size. For SOCPs at $n=500$, it is over two orders of magnitude faster than both baselines. At $n=700$, only dOPT completes the benchmark; both baselines terminate with errors on nearly every instance. FFOLayer also has backward-pass failures at smaller sizes, as noted in the table. For SDPs at $n=100$, dOPT's backward pass is $2.3\times$ faster than FFOLayer and $16.6\times$ faster than CVXPYLayers; at $n=150$, CVXPYLayers terminates with errors on every instance.

\subsection{Learning Stable Switched Linear Systems}
\label{sec:switched_system}

To demonstrate dOPT in an end-to-end learning benchmark with a differentiable SDP layer, we consider a discrete-time switched linear system:
\begin{equation}
    x_{k+1}=A_{\sigma(k)}x_k,
\end{equation}
where $A_1,\ldots,A_M\in\mathbb{R}^{d\times d}$ are mode matrices and $\sigma(k)$ selects the active mode. Such systems model dynamics that switch between operating regimes. Given observed trajectories, the goal is to learn both the mode matrices $A_i$ and the switching signal $\sigma(k)$. Minimizing prediction error alone, however, does not ensure that the learned dynamics are stable. To promote stability, we regularize the training loss using a standard common quadratic Lyapunov certificate~\citep{ahmadi2014joint}. Specifically, a differentiable SDP layer computes
\begin{equation}
    t^*\bigl((A_i)_{i=1}^M\bigr)=\max_{P,t}\;t
    \quad\text{s.t.}\quad
    P\succeq0,\quad \operatorname{tr}(P)=1,\quad
    P-A_i^\top P A_i\succeq tI,\quad i=1,\ldots,M.
    \label{eq:switch_system}
\end{equation}
A positive margin $t^*>0$ certifies stability under arbitrary switching. We use the scaled margin $\alpha t^*$ to regularize trajectory fitting, giving
$\mathcal{L}=\mathcal{L}_{\mathrm{traj}}-\alpha t^*$.

Figure~\ref{fig:learning_experiments}(left) evaluates scalability on synthetic switched systems. Since the original FFOLayer implementation does not directly support parametric PSD constraints as in \Eqref{eq:switch_system}, we compare against a minimally modified version, \emph{FFOLayer (Modified)}, and an equivalent lifted formulation, \emph{FFOLayer (Lifted)}. At $d=40$, dOPT's backward pass is $6.4\times$ faster than the next-fastest FFOLayer variant and $97.9\times$ faster than CVXPYLayers; at $d=70$, it remains over an order of magnitude faster than FFOLayer and over two orders of magnitude faster than CVXPYLayers.

We also adapt the networked vehicle-platoon benchmark of \citet{makhlouf2014networked,chen2015benchmark}, with $N\in\{3,5,10\}$ vehicles. Each vehicle regulates a three-dimensional physical state using information from the others, with modes corresponding to normal communication and communication loss. This serves as a controlled benchmark for differentiation through the SDP layer rather than a real-world platooning solution. All methods learn nearly identical positive stability margins, while dOPT is fastest across all three sizes, completing training about $1.8$--$1.9\times$ faster than the next-fastest method. Figure~\ref{fig:learning_experiments}(right) shows the 10-vehicle training dynamics, and Table~\ref{tab:platoon_stability_compact} reports results for all three sizes; additional details appear in Appendix~\ref{app:platoon_details}.

\begin{figure}[tbp]
    \centering
    \includegraphics[width=\linewidth]{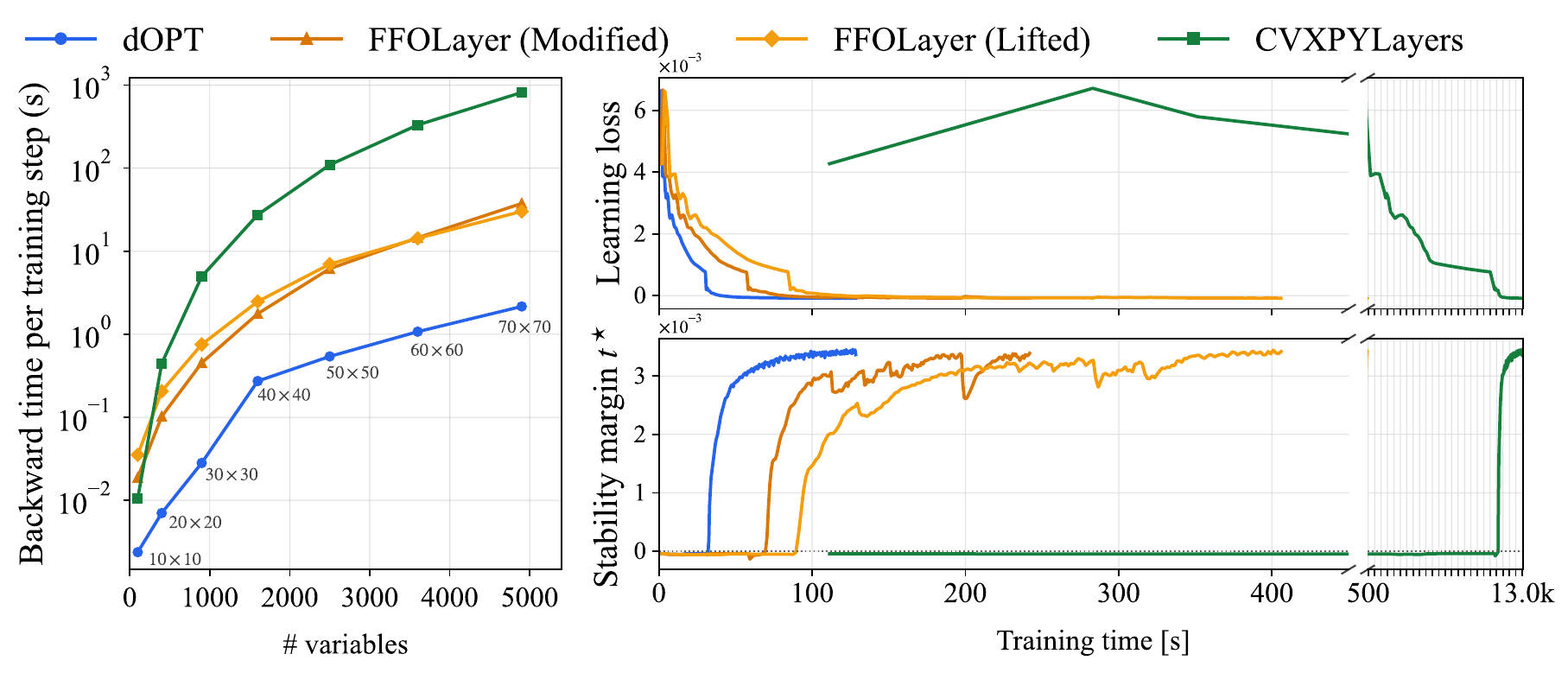}
    \vspace{-2em}
    \caption{Learning stable switched linear systems. Left: backward time against number of optimization variables $O(d^2)$ for synthetic switched linear systems. Right:
    learning loss and stability margin $t^*$ versus training time for the
    10-vehicle platoon system ($N=10$).}
    \label{fig:learning_experiments}
    \vspace{-1em}
\end{figure}

\begin{table}[tbp]
    \caption{Performance of the adapted vehicle-platoon benchmark for
    3, 5, and 10 vehicles.}
    \label{tab:platoon_stability_compact}
    \vspace{.4em}
    \begingroup
\centering
\scriptsize
\setlength{\tabcolsep}{2pt}
\renewcommand{\arraystretch}{0.95}

\begin{adjustbox}{max width=\textwidth}
\begin{tabular}{
    @{}l
    ccc
    @{}p{14pt}@{}
    ccc
    @{}p{14pt}@{}
    ccc
    @{}
}
\toprule
&
\multicolumn{3}{c}{\textbf{$9 \times 9$} $(N=3)$}
&
&
\multicolumn{3}{c}{\textbf{$15 \times 15$} $(N=5)$}
&
&
\multicolumn{3}{c}{\textbf{$30 \times 30$} $(N=10)$}
\\
\cmidrule(lr){2-4}
\cmidrule(lr){6-8}
\cmidrule(lr){10-12}

Method
& Fwd \tiny{[ms]} & Bwd \tiny{[ms]}  & Train \tiny{[s]}
&
& Fwd \tiny{[ms]} & Bwd \tiny{[ms]}  & Train \tiny{[s]}
&
& Fwd \tiny{[ms]} & Bwd \tiny{[ms]}  & Train \tiny{[s]}
\\
\midrule

dOPT
& $\mathbf{12.7}$ & $\mathbf{1.4}$ & $\mathbf{4.3}$
&
& $41.4$ & $\mathbf{3.6}$ & $\mathbf{12.0}$
&
& $468.4$ & $\mathbf{26.1}$ & $\mathbf{128.8}$
\\

FFOLayer (Modified)
& $14.3$ & $16.2$ & $8.0$
&
& $\mathbf{41.0}$ & $43.6$ & $21.4$
&
& $\mathbf{459.0}$ & $472.7$ & $241.9$
\\

FFOLayer (Lifted)
& $28.1$ & $30.5$ & $15.1$
&
& $84.3$ & $88.9$ & $44.2$
&
& $797.4$ & $815.1$ & $405.9$
\\

CVXPYLayers
& $93.3$ & $6.6$ & $28.6$
&
& $345.3$ & $84.3$ & $115.2$
&
& $5592.1$ & $4948.6$ & $13047.9$
\\

\bottomrule
\end{tabular}
\end{adjustbox}

\par
\endgroup

\end{table}

\section{Conclusion}

We introduced dOPT, a framework for differentiating through parametric convex conic programs via first-order equivalent equality-constrained QPs. Explicit reductions for NLPs, QPs, SOCPs, and SDPs accommodate geometrically singular configurations under standard regularity assumptions and enable efficient differentiation independently of the forward solver. Notable limitations include the restriction to convex problems, reliance on high-quality solutions, and limited exploitation of algebraic structure, especially for SDPs. Forward solves remain the main computational bottleneck, which dOPT does not address. These limitations motivate future work on extensions to nonconvex problems, support for approximate solutions (\eg, from L2O~\citep{chen2022learning}), more effective use of structure, weaker regularity assumptions, and other spectral cones.

\FloatBarrier

\bibliography{references}
\bibliographystyle{dopt_references}

\newpage
\appendix
\section{Preliminaries}
\label{app:preliminaries}
This section introduces additional definitions concerning cone geometry and regularity assumptions \citep{boyd2004convex,bonnans2000perturbation,Rockefellar1998VariationalAnalysis}. $\theta$ is omitted for simplicity.

\begin{definition}[Face]

A face of a convex set $C$ is a convex subset $F \subseteq C$ such that, whenever a strict convex combination of two points in $C$ lies in $F$, both points lie in $F$. That is, if $\theta x + (1-\theta)y \in F$ for some $x,y \in C$ and $0<\theta<1$, then $x,y \in F$.
\end{definition}

\begin{definition}[Active face]
The \emph{active face} of a convex cone $\coneK \subseteq \mathbb{R}^{n}$ at a point $y \in \coneK$ is defined as
\begin{equation*}
    \mathcal{F}_{\coneK}(y):=\bigcap\{F \subset \coneK: F\text { is a face of } \coneK, y \in F \}
\end{equation*}
\end{definition}

\begin{definition}[Non-degeneracy]
The problem \ref{prob:conic} is \emph{non-degenerate} at the primal solution $z^*$ if the following conditions hold:
\begin{equation*}
    \begin{aligned}
    \nabla h\left(z^*\right)\mathbb{R}^n&=\mathbb{R}^l,\\
    \nabla G\left(z^*\right)\ker\!\left(\nabla h\left(z^*\right)\right)
    +\operatorname{lin}\left(\mathcal{T}_{\coneK}\left(y^*\right)\right)&=\mathbb{R}^m,
    \end{aligned}
\end{equation*}
where $\operatorname{lin}(\mathcal{T}_{\coneK}(y^*))$ denotes the largest linear subspace contained in the tangent cone $\mathcal{T}_{\coneK}(y^*)$.
\end{definition}

Define the critical directions of Problem~\ref{prob:conic} at $z^*$ by
\begin{equation*}
    \mathcal{D}^*:=\left\{d\in\mathbb{R}^n:
    \nabla G(z^*)d\in\mathcal{C}_{\coneK}(y^*),\quad
    \nabla h(z^*)d=0\right\}.
\end{equation*}

For $y\in\coneK$ and $v\in\mathcal T_{\coneK}(y)$, define the
second-order tangent set by
\[
\mathcal T_{\coneK}^2(y,v)
:=
\left\{w:
\operatorname{dist}\!\left(y+tv+\tfrac12t^2w,\coneK\right)
=o(t^2)\ \text{as }t\downarrow0
\right\}.
\]
For a set $D$, let $\sigma(a,D):=\sup_{w\in D}\langle a,w\rangle$
denote its support function.

\begin{definition}[Second-order sufficient condition (SOSC)]
Problem~\ref{prob:conic} satisfies the SOSC at the optimal primal-dual solution
$(z^*,\mu^*,\nu^*)$ if, for every $d\in\mathcal D^*\setminus\{0\}$,
\[
d^T\nabla_{zz}^2\mathcal L(z^*,\mu^*,\nu^*)d
-
\sigma\!\left(
-\mu^*,
\mathcal T_{\coneK}^2
\bigl(y^*,\nabla G(z^*)d\bigr)
\right)>0.
\]
\end{definition}

The second-order sufficient condition requires the Lagrangian curvature,
together with the curvature of the cone, to be strictly positive along every
nonzero critical direction. Intuitively, this ensures sufficient curvature along the feasible
first-order directions relevant to optimality, making the stationary
solution locally isolated.

\begin{definition}[$\mathcal{C}^2$-cone reducibility]
\label{def:c2_cone_reducibility}
\citep[Definition~3.135]{bonnans2000perturbation}
A closed convex set $\coneK\subseteq\mathbb{R}^m$ is said
to be $\mathcal{C}^2$-cone reducible
at $y^*\in\coneK$ if there exist an open neighborhood $U$
of $y^*$, a pointed closed convex cone
$\mathcal{Q}\subseteq\mathbb{R}^q$, and a twice continuously
differentiable mapping $\Xi:U\to\mathbb{R}^q$,
for some integer $q\geq 0$, satisfying:
\begin{enumerate}
    \item[(i)] $\Xi(y^*)=0$;
    \item[(ii)] $D\Xi(y^*):\mathbb{R}^m\to\mathbb{R}^q$
    is onto;
    \item[(iii)] $\coneK\cap U
    =\{y\in U:\Xi(y)\in\mathcal{Q}\}$.
\end{enumerate}
\end{definition}

Intuitively,  $\mathcal{C}^2$-cone reducibility means that local
feasibility can be described by a twice continuously differentiable
mapping into a simpler pointed closed convex cone. The class of $\mathcal{C}^2$-cone reducible sets is rich, including all the polyhedral convex sets,
second-order cones, positive semidefinite cones, and epigraph
cones of Ky Fan matrix $k$-norms. This property is also preserved under finite Cartesian products \citep{bonnans2000perturbation}.

\begin{proposition}\label{lemma:active_face}\citep{ding2023strict}
Under strict complementarity and non-degeneracy, the active faces associated with the complementary pair $(y^*,\mu^*)$ satisfy
\begin{equation}
\begin{aligned}
    \mathcal{F}_{\coneK}(y^*) &= \mu^{*\perp} \cap \coneK \\
    \mathcal{F}_{\coneK^*}(\mu^*) &= - \mathcal{N}_{\coneK}(y^*) = y^{*\perp} \cap \coneK^{*}.
\end{aligned}
\end{equation}
\end{proposition}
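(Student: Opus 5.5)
We prove the two identities separately, using only convex duality and the two regularity assumptions.

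\textbf{Step 1: the primal active face.} Take $y\in\mu^{*\perp}\cap\coneK$. To show $y\in\mathcal F_{\coneK}(y^*)$, it suffices to show that every face $F$ containing $y^*$ also contains $y$. For this we use the standard fact that $\mu^{*\perp}\cap\coneK$ is an \emph{exposed} face of $\coneK$: it is a face because $\langle\mu^*,\cdot\rangle\ge0$ on $\coneK$. Since $\langle\mu^*,y^*\rangle=0$, it contains $y^*$, so $\mathcal F_{\coneK}(y^*)\subseteq\mu^{*\perp}\cap\coneK$.

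For the reverse inclusion we need strict complementarity. The minimal face $\mathcal F_{\coneK}(y^*)$ is the unique face having $y^*$ in its relative interior. Its conjugate face is $\mathcal F_{\coneK}(y^*)^\perp\cap\coneK^*=y^{*\perp}\cap\coneK^*$. The second equality holds because $y^*\in\relint{\mathcal F}$ and $\coneK^*$ is nonnegative on $\mathcal F$, so a dual vector vanishing at $y^*$ vanishes on all of $\mathcal F$.

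The normal cone equals minus this set:
\[
-\mathcal N_{\coneK}(y^*)=\{w\in\coneK^*:\langle w,y^*\rangle=0\}=y^{*\perp}\cap\coneK^*,
\]
which follows directly from the definition of $\mathcal N_{\coneK}$ for a cone. This already proves the second equality of the second line of the lemma.

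Strict complementarity says $\mu^*\in\relint{y^{*\perp}\cap\coneK^*}$. Hence $\mu^*$ lies in the relative interior of the conjugate face $\mathcal F^\triangle$ of $\mathcal F:=\mathcal F_{\coneK}(y^*)$. Consequently $\mu^{*\perp}\cap\coneK=(\mathcal F^\triangle)^\perp\cap\coneK=\mathcal F^{\triangle\triangle}$. It remains to show $\mathcal F^{\triangle\triangle}=\mathcal F$, i.e.\ that $\mathcal F$ is exposed.

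\textbf{Step 2: exposedness via reducibility.} In general $\mathcal F^{\triangle\triangle}\supseteq\mathcal F$, with equality exactly when $\mathcal F$ is exposed. We get exposedness from $\mathcal C^2$-cone reducibility together with non-degeneracy; this is the main obstacle.

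The approach is to pull back through the reduction map $\Xi$. Locally $\coneK\cap U=\Xi^{-1}(\mathcal Q)$ with $\Xi(y^*)=0$ and $D\Xi(y^*)$ onto. By Bonnans--Shapiro, $\mathcal N_{\coneK}(y^*)=D\Xi(y^*)^T\mathcal N_{\mathcal Q}(0)=D\Xi(y^*)^T(-\mathcal Q^*)$ and $\mathcal T_{\coneK}(y^*)=D\Xi(y^*)^{-1}\mathcal Q$. Thus $\mathrm{lin}\,\mathcal T=\ker D\Xi(y^*)$.

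Take $\mu^*=D\Xi(y^*)^T\lambda$ with $\lambda\in\relint{\mathcal Q^*}$, which strict complementarity allows. Since $\mathcal Q$ is pointed, $\mathcal Q^*$ has nonempty interior. Then $\lambda\in\operatorname{int}\mathcal Q^*$, so $\lambda^\perp\cap\mathcal Q=\{0\}$. It follows that
\[
\mu^{*\perp}\cap\mathcal T_{\coneK}(y^*)=\ker D\Xi(y^*)=\mathrm{lin}\,\mathcal T_{\coneK}(y^*).
\]
This is the critical cone.

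\textbf{Step 3: from cones to faces.} Because $\coneK$ is a cone, $\mu^{*\perp}\cap\coneK\subseteq y^*+\mu^{*\perp}\cap\mathcal T_{\coneK}(y^*)=y^*+\mathrm{lin}\,\mathcal T$. Points of $\coneK$ in $y^*+\mathrm{lin}\,\mathcal T$ lie in $\mathcal F$: for such $y$, both $y$ and $2y^*-y$ remain feasible to first order. A segment in $\coneK$ through $y^*$ in its interior then forces $y\in\mathcal F$.

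Making this rigorous is the delicate part. A first-order statement does not immediately give exact containment, since the cone could curve away from the line. I would handle it with the homogeneity of $\coneK$: replace $y$ by $y^*+t(y-y^*)$ with small $t$. This point lies in $\coneK$ by convexity and in $U$, so we may apply $\Xi$ and use $\lambda\in\operatorname{int}\mathcal Q^*$ to show $\Xi$ vanishes along the segment. The segment then lies in the face $\Xi^{-1}(0)$, and extending it slightly beyond $y^*$ shows $y^*$ is interior to a segment containing $y$, hence $y\in\mathcal F$. Non-degeneracy is used only to guarantee the multiplier representation and uniqueness of $\mu^*$ (so the relative-interior choice is legitimate).

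\textbf{Step 4: the dual identity.} Apply the same argument to the pair $(\mu^*,y^*)$ in $\coneK^*$. This gives $\mathcal F_{\coneK^*}(\mu^*)=y^{*\perp}\cap\coneK^*$, which equals $-\mathcal N_{\coneK}(y^*)$ by Step 1. Alternatively, it follows directly: $\mu^*$ lies in the relative interior of the face $y^{*\perp}\cap\coneK^*$ by strict complementarity, and the minimal face containing a relative-interior point is the face itself.
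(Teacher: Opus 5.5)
Your Step 1 is correct and is in fact an exact reduction. With $\mathcal F:=\mathcal F_{\coneK}(y^*)$, strict complementarity gives $\mu^*\in\relint{\mathcal F^\triangle}$, hence $\mu^{*\perp}\cap\coneK=\mathcal F^{\triangle\triangle}$. So the first identity holds if and only if $\mathcal F$ is exposed. Your direct argument in Step 4 is also correct and uses only strict complementarity. (The paper cites this lemma without proof.)

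The gap is in Step 3, and it cannot be repaired. The vanishing of $\Xi$ along the segment is true, but it needs more than a first-order expansion. At $y_t:=y^*+td\in\mu^{*\perp}\cap\coneK$ one has $-\mu^*\in\mathcal N_{\coneK}(y_t)=D\Xi(y_t)^T\mathcal N_{\mathcal Q}(\Xi(y_t))$. The resulting multipliers converge to $\lambda\in\operatorname{int}\mathcal Q^*$, and this forces $\Xi(y_t)=0$. What fails is the next sentence. $\Xi^{-1}(0)\cap U$ is a curved $\mathcal C^2$ manifold, not a face of $\coneK$, and nothing lets you extend the segment beyond $y^*$ inside $\coneK$. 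In fact $\mathcal C^2$-cone reducibility, strict complementarity and non-degeneracy (even with SOSC) do not imply that $\mathcal F$ is exposed.

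\textbf{Counterexample.} Let $h(x)=e^{-1/x}$ for $x>0$ and $h(x)=0$ for $x\le0$; this $h$ is $C^\infty$ and convex on $[-1,\tfrac14]$. Set $C=\{(x,s):-1\le x\le\tfrac14,\ h(x)\le s\le1\}$, $\coneK=\{\tau(x,s,1):\tau\ge0,\ (x,s)\in C\}$ and $y^*=(0,0,1)$.
\begin{itemize}
\item \emph{Reducibility.} Near $y^*$, $\coneK=\{\Xi\ge0\}$ with $\Xi(x,s,t)=s-t\,h(x/t)$, $\Xi(y^*)=0$ and $D\Xi(y^*)=(0,1,0)$. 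So $\coneK$ is $\mathcal C^2$-cone reducible at $y^*$ with $\mathcal Q=\mathbb R_+$.
\item \emph{Strict complementarity.} A functional $(a,b,0)$ nonnegative on $C$ needs $a\le0$ (test at $(-1,0)$) and $a\ge0$ (because $h(x)=o(x)$). Hence $-\mathcal N_{\coneK}(y^*)=\mathbb R_+(0,1,0)$, and $\mu^*=(0,1,0)$ is strictly complementary.
\item \emph{Remaining assumptions.} $(y^*,\mu^*)$ is the KKT pair of $\min_z\langle\mu^*,z\rangle+\tfrac12\|z-y^*\|^2$ s.t. $z\in\coneK$. Non-degeneracy is trivial, and SOSC holds because the Lagrangian Hessian is $I$ and $\nabla^2\Xi(y^*)=0$.
\item \emph{Failure of the identity.} $(0,0)$ is an extreme point of $C$, so $\mathcal F_{\coneK}(y^*)=\mathbb R_+y^*$. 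But $\mu^{*\perp}\cap\coneK=\{(x,0,t):-t\le x\le0\}$ is two-dimensional.
\item \emph{Where your step breaks.} Take $d=(-1,0,0)$. Then $\Xi(y^*+td)=0$, but $\Xi(y^*-sd)=-e^{-1/s}<0$ for every $s>0$, so the extension beyond $y^*$ is impossible.
\end{itemize}

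The first identity therefore needs an extra hypothesis, such as facial exposedness of $\coneK$. This holds for the orthant, the second-order cone, the PSD cone and their products, which covers every cone the paper actually uses. Under that hypothesis your Step 1 already completes the proof. Non-degeneracy plays no role anywhere: the representation $\mu^*=D\Xi(y^*)^T\lambda$ comes from surjectivity of $D\Xi(y^*)$, not from non-degeneracy of the problem.
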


Lemma~\ref{lemma:active_face} shows that the active faces follow from the KKT conditions under strict complementarity. In particular, the primal active face is obtained by combining primal cone feasibility~\Eqref{eqn:kkt_primal_cone} with complementarity~\Eqref{eqn:kkt_complementarity}.

\section{Proof of Theorem~\ref{theorem:conic_program}}
\label{app:proof_conic_program}

\begin{proof}

The KKT conditions of the parametric Problem~\ref{prob:conic} are:
\begin{equation}
\begin{aligned}
    \nabla_z f_{\theta}(z^*(\theta)) - \nabla_z h_{\theta}(z^*(\theta))^T\nu^*(\theta) - \nabla_z G_{\theta}(z^*(\theta))^T\mu^*(\theta) &= 0 \\
    h_{\theta}(z^*(\theta)) &= 0 \\
    y^*(\theta) &\in \coneK \\
    \mu^*(\theta) &\in \coneK^* \\
    \langle \mu^*(\theta), y^*(\theta)\rangle &= 0.
\end{aligned} \label{eqn:conic_kkt}
\end{equation}
Combining complementarity with primal and dual feasibility shows that $y^*$ and $\mu^*$ lie in their respective active faces: $y^* \in \coneK \cap \mu^{*\perp} = \mathcal{F}_{\coneK}(y^*)$ and $\mu^* \in \coneK^* \cap y^{*\perp} = \mathcal{F}_{\coneK^*}(\mu^*) = -\mathcal{N}_\coneK(y^*)$ by Lemma~\ref{lemma:active_face}.

Now we write the differential of the original Problem~\ref{prob:conic} evaluated at $(z^*,\mu^*,\nu^*)$ with fixed $\theta = \theta_0$. To simplify notation, once $\theta$ is fixed at $\theta_0$, we omit explicit subscripts and function arguments (\eg, $_\theta$ and $(\cdot)$); the same convention is used in the subsequent differential derivations. The differential of the KKT system is:

\begin{equation}
\begin{aligned}
    \begin{aligned}[b]
        \nabla_{zz}^2\mathcal{L} \dev z^*
        &+ \nabla_{z\theta}^2 f \dev \theta
        - \sum_{k=1}^{l}\nu_k^*\nabla_{z\theta}^2 h_k \dev \theta \\
        &- \sum_{i=1}^{m}\mu_i^*\nabla_{z\theta}^2 G_i \dev \theta
        - \nabla_z h^T \dev \nu^* - \nabla_z G^T \dev \mu^*
    \end{aligned} &= 0 \\
    \nabla_z h \dev z^* + \nabla_\theta h \dev \theta &= 0 \\
    \dev y^* &\in \mathcal{C}_{\coneK}(y^*)  \\
    \dev \mu^* &\in \mathcal{C}_{\coneK^*}(\mu^*),
\end{aligned} \label{eqn:conic_diff}
\end{equation}

where the last two conditions involve critical cones at the optimal solution, $\mathcal{C}_{\coneK}(y^*)$ and $\mathcal{C}_{\coneK^*}(\mu^*)$, as defined in Section~\ref{sec:conic_geometry}: $\mathcal{C}_{\coneK}(y^*) = \mathcal{T}_{\coneK}(y^*) \cap \mu^{*\perp}$, $\mathcal{C}_{\coneK^*}(\mu^*) = \mathcal{T}_{\coneK^*}(\mu^*) \cap y^{*\perp}$.

The differential of the complementarity condition can be omitted because the critical-cone conditions already imply $\langle \dev \mu^*,y^*\rangle=\langle \mu^*,\dev y^*\rangle=0$.

We next show that there exist matrices $H^*\in\mathbb{R}^{m\times m}$ and $C^*\in\mathbb{R}^{r_C\times m}$ such that $\dev\mu^*$ admits the representation
\begin{equation}
    \dev \mu^* = H^* \dev y^* + C^{*T}\dev \hat{\mu}. \label{eqn:dmu}
\end{equation}

By $C^2$-cone reducibility, there exist an open neighborhood $\mathcal Y$
of $y^*$, a pointed closed convex cone $D\subseteq\mathbb R^r$, and
a $C^2$ mapping $\Xi:\mathcal Y\to\mathbb R^r$ such that
\[
\Xi(y^*)=0,\qquad \Xi'(y^*)\text{ is onto},\qquad
\coneK\cap\mathcal Y=\{y\in\mathcal Y:\Xi(y)\in D\}.
\]
Here $\Xi'$ denotes the Jacobian. Since $\Xi'(y^*)$ has full row rank
and $\Xi'$ is continuous, we may choose $\mathcal Y$ small enough that
$\Xi'(y)$ has full row rank for every $y\in\mathcal Y$.
The normal-cone reduction formula
\citep[Lemma~2.1]{yin2019perturbation} gives
\[
\mathcal N_{\coneK}(y)=\Xi'(y)^T\mathcal N_D(\Xi(y))
\qquad(y\in\coneK\cap\mathcal Y).
\]
For each $\theta$ near $\theta_0$, the KKT conditions give
$-\mu^*(\theta)\in\mathcal N_{\coneK}(y^*(\theta))$.
Applying the reduction formula at $y^*(\theta)$ therefore gives a unique
$u(\theta)\in\mathcal N_D(\Xi(y^*(\theta)))$ such that
\[
-\mu^*(\theta)=\Xi'(y^*(\theta))^Tu(\theta).
\]
Since $\Xi'$ is $C^1$ and has full row rank locally, $u(\theta)$ is
differentiable along the differentiable primal--dual solution branch.
Set $u^*:=u(\theta_0)\in D^\circ$, where $D^\circ$ is the negative polar.
Since $\Xi'(y^*)^T$ is injective, strict complementarity implies
$u^*\in\operatorname{ri}D^\circ=\operatorname{int}D^\circ$.
Consequently, $D\cap(u^*)^\perp=\{0\}$.

To identify the critical cone, first note that surjectivity of
$\Xi'(y^*)$ ensures metric regularity, and hence metric subregularity,
of $y\mapsto\Xi(y)-D$ at $(y^*,0)$.
Applying \citet[Corollary~2.1]{liu2019computation} to the local
representation $\coneK=\Xi^{-1}(D)$ therefore gives
\[
\begin{aligned}
\mathcal T_{\coneK}(y^*)
&=\{v:\Xi'(y^*)v\in\mathcal T_D(\Xi(y^*))\}\\
&=\{v:\Xi'(y^*)v\in D\},
\end{aligned}
\]
where the last equality uses $\Xi(y^*)=0$ and $\mathcal T_D(0)=D$.
By the definition of the critical cone and
$-\mu^*=\Xi'(y^*)^Tu^*$, it follows that
\[
\begin{aligned}
\mathcal{C}_{\coneK}(y^*)
&=\mathcal T_{\coneK}(y^*)\cap(\mu^*)^\perp\\
&=\{v:\Xi'(y^*)v\in D,\ \langle u^*,\Xi'(y^*)v\rangle=0\}\\
&=\{v:\Xi'(y^*)v\in D\cap(u^*)^\perp\}
=\ker\Xi'(y^*).
\end{aligned}
\]
We therefore choose
\[
\begin{aligned}
C(y)&:=\Xi'(y),\qquad C^*:=C(y^*),\\
H^*&:=-\nabla_y^2\langle u^*,\Xi\rangle(y^*)
=-\sum_{i=1}^r u_i^*\nabla_y^2\Xi_i(y^*).
\end{aligned}
\]
Then $\ker C^*=\mathcal{C}_{\coneK}(y^*)$ and $H^*$ is symmetric.
Taking $\hat\mu(\theta_0)=-u^*$ gives
$\mu^*=C^{*T}\hat\mu(\theta_0)$.

Differentiating the multiplier representation at $\theta_0$ gives
\[
\begin{aligned}
\dev\mu^*
&=-\left(\sum_{i=1}^r u_i^*\nabla_y^2\Xi_i(y^*)\right)\dev y^*
-\Xi'(y^*)^T\dev u\\
&=H^*\dev y^*-C^{*T}\dev u.
\end{aligned}
\]
The KKT comparison below identifies $\dev\hat\mu=-\dev u$ for the
reduced-problem multiplier, yielding~\Eqref{eqn:dmu}.

Moreover, differentiating $C(y)=\Xi'(y)$ gives,
for every $v\in\mathbb R^m$,
\[
DC(y^*)[v]^T\hat\mu(\theta_0)
=-\sum_{i=1}^r u_i^*\nabla_y^2\Xi_i(y^*)v
=H^*v.
\]
then set $C_\theta^*:=C(G_\theta(z^*))$, with the reference $z^*$
held fixed. The chain rule yields
\begin{equation}
\dev C_\theta^{*T}\hat\mu(\theta_0)
=H^*\nabla_\theta G_{\theta_0}(z^*)\dev\theta.\label{eqn:dC_H}
\end{equation}
This construction yields a compatible pair $C_{\theta}^*$ and $H^*$, with \Eqref{eqn:dmu} and \Eqref{eqn:dC_H} are the two identities needed for the KKT comparison below.

Substituting \Eqref{eqn:dmu} into the differential KKT system~\Eqref{eqn:conic_diff} yields
\begin{equation}
\begin{aligned}
    \begin{aligned}[b]
        \nabla_{zz}^2\mathcal{L} \dev z^*
        &+ \nabla_{z\theta}^2 f \dev \theta
        - \sum_{k=1}^{l}\nu_k^*\nabla_{z\theta}^2 h_k \dev \theta \\
        &- \sum_{i=1}^{m}\mu_i^*\nabla_{z\theta}^2 G_i\dev \theta
        - \nabla_z h^T \dev \nu^* \\
        &- \nabla_z G^T (H^* \dev y^* + C^{*T}\dev \hat{\mu})
    \end{aligned} &= 0 \\
    \nabla_z h \dev z^* + \nabla_\theta h \dev \theta &= 0 \\
    C^* (\nabla_z G \dev z^* + \nabla_\theta G \dev \theta) &= 0
\end{aligned} \label{eqn:reduced_conic_diff}
\end{equation}
The third equation is obtained by noting that $\dev y^* = \nabla_z G\dev z^* + \nabla_\theta G \dev \theta \in \mathcal{C}_\coneK(y^*) = \ker{C^*}$.

Finally, we want to show the first-order equivalence between the reduced problem and the original problem, and $\hat{\mu}$ is indeed the dual solution of reduced Problem~\ref{prob:reduced_conic}.

For reduced Problem~\ref{prob:reduced_conic}, denote $\hat{\mu}$ and $\hat{\nu}$ as the dual variables for the reduced conic constraint $C_\theta^*\bigl((G_{\theta}(z^*)-y^*) + \nabla_z G_{\theta}(z^*)(z-z^*)\bigr) = 0$ and the affine constraint $h_{\theta}(z) = 0$, respectively. The KKT conditions are:
\begin{equation}
\begin{aligned}
    \begin{aligned}[b]
        &\Bigl(\nabla^{2}_{zz} \mathcal{L}_{\theta}(z^*)
        - \nabla_z G_{\theta}(z^*)^{T}H^* \nabla_z G_{\theta}(z^*)\Bigr)
        \bigl(\hat{z}(\theta)-z^*\bigr) \\
        &\quad + \nabla_{z}f_{\theta}(z^*)
        - \nabla_z h_{\theta}(\hat{z}(\theta))^T\hat{\nu}(\theta) \\
        &\quad - \nabla_z G_{\theta}(z^*)^T C_\theta^{*T}\hat{\mu}(\theta)
    \end{aligned} &= 0 \\
    h_\theta(z^*)+\nabla_z h_\theta(z^*)
\bigl(\hat z(\theta)-z^*\bigr) &= 0 \\
    C_\theta^*\Bigl(G_{\theta}(z^*)-y^*
    + \nabla_z G_{\theta}(z^*)\bigl(\hat{z}(\theta)-z^*\bigr)\Bigr) &= 0
\end{aligned} \label{eqn:reduced_conic_kkt}
\end{equation}

Under the standing assumptions, the system has the unique solution $z^*=\hat z(\theta_0)$, $\mu^*=C^{*T}\hat\mu(\theta_0)$, and $\nu^*=\hat\nu(\theta_0)$ at $\theta_0$.

The differential of the KKT system \Eqref{eqn:reduced_conic_kkt} at $\theta_0$ is:
\begin{equation}
\begin{aligned}
    \begin{aligned}[b]
        \left(\nabla^{2}_{zz} \mathcal{L} - \nabla_z G^{T}H^* \nabla_z G\right) \dev \hat{z}
        &+ \nabla_{z\theta}^2 f \dev \theta
        - \sum_{k=1}^{l}\hat{\nu}_k\nabla_{z\theta}^2 h_k \dev \theta \\
        &- \nabla_z h^T \dev \hat{\nu}
        - \sum_{i=1}^{m}(C^{*T}\hat{\mu})_i\nabla_{z\theta}^2 G_i\dev \theta \\
        &- \nabla_zG^TH^*\nabla_\theta G\dev\theta \\
        &- \nabla_z G^T C^{*T}\dev \hat{\mu}
    \end{aligned} &= 0 \\
    \nabla_z h \dev \hat{z} + \nabla_\theta h \dev \theta &= 0 \\
    C^* \nabla_z G \dev \hat{z} + C^* \nabla_\theta G \dev \theta &= 0
\end{aligned} \label{eqn:reduced_conic_diff_kkt}
\end{equation}
Here the term, $\nabla_zG^TH^*\nabla_\theta G\dev\theta $, follows from
\Eqref{eqn:dC_H};
Note that \Eqref{eqn:reduced_conic_diff} is equivalent to \Eqref{eqn:reduced_conic_diff_kkt}, which also shows $\hat{\mu}$ is indeed the dual solution.
Rewriting \Eqref{eqn:reduced_conic_diff_kkt} in block-matrix form yields
\Eqref{eqn:reduced_conic_diff_kkt_matrix}.

\end{proof}

\subsection{A Concrete Example for Scalar Cone Constraints}

We illustrate the decomposition of $\dev\mu^*$ in~\Eqref{eqn:dmu} for a cone
locally defined at $y^*$ by a twice continuously differentiable scalar-valued
function $\phi:\mathbb{R}^m\to\mathbb{R}$ with
$\phi(y^*)=0$ and $\nabla\phi(y^*)\neq0$:
$\coneK=\{y:\phi(y)\leq0\}$.

By strict complementarity, there exists a scalar $\alpha>0$ such that
$$\mu^* = -\alpha \nabla_y \phi(y^*).$$

Differentiating this relation with respect to the parameters, we obtain
$$\dev \mu^* = -\alpha \nabla_{yy}^2 \phi(y^*) \dev y^* - \dev \alpha \nabla_y \phi(y^*).$$

Comparing with \Eqref{eqn:dmu}, we can identify the components:
\begin{itemize}
    \item $H^*=-\alpha \nabla_{yy}^2 \phi(y^*)$ captures the curvature effect (how the normal direction rotates as $y^*$ moves along the cone boundary),
    \item $C^*=\nabla_y \phi(y^*)^{T}$ provides the basis for the normal space,
    \item $\dev\hat{\mu}=-\dev \alpha$ is the coefficient for the normal component.
\end{itemize}

\section{Complementary Details for Common Reductions (Section~\ref{sec:instances})}
\label{app:instance_details}

\subsection{Details on Second-Order Cone Programs (Section~\ref{sec:socp})}
\label{app:socp}
\begin{proof}[\textbf{Proof of Lemma~\ref{lemma:socp_H_C}}]
For $j\in\Jna$, the function $\phi(x,t)=\|x\|_2-t$ is smooth
near $y_j^*$, with $\phi(y_j^*)=0$ and $\nabla\phi(y_j^*)\neq0$.
Thus $\Xi_j=\phi$ is a local reduction to $\mathbb R_-$.
Strict complementarity gives
\[
\mu_j^*=-\alpha_j\nabla\phi(y_j^*),\qquad \alpha_j>0.
\]
Since the last component of $\nabla\phi$ is $-1$ and
$\mu_j^*=(u_j^*,s_j^*)$, we have $\alpha_j=s_j^*$.
The construction in Appendix~\ref{app:proof_conic_program} therefore yields
\[
H_j^*=-s_j^*\nabla^2\phi(y_j^*),\qquad C_{j,\theta}^* = \nabla\phi(G_j(z^*))^T \qquad
C_j^*:=C_{j,\theta_0}^*=\nabla\phi(y_j^*)^T.
\]
Thus, $\ker C_j^*=\{\mathrm dy:\nabla\phi(y_j^*)^T\mathrm dy=0\}$, which is the critical cone in~\Eqref{eq:socp_critical_cone}.

For $j\in\Jap$, use the identity reduction $\Xi_j(y_j)=y_j$
to $\mathcal K_{\mathrm{SOC}_j}$. Its Jacobian is $I$ and its
second derivative vanishes, so the same construction gives
\[
H_j^*=0,\qquad C_{j,\theta}^*=C_j^*=I.
\]
In particular, $\ker C_j^*=\{0\}$ agrees with the critical cone
in~\Eqref{eq:socp_critical_cone}.
\end{proof}

\subsection{Details on Semidefinite Programs (Section~\ref{sec:sdp})}
\label{app:sdp}
We provide the details underlying
Lemma~\ref{lemma:sdp_H_C}.
Section~\ref{sec:conic_geometry} and the proof of
Theorem~\ref{theorem:conic_program} give
\begin{align*}
    Z^* \in \mathcal{F}_{\coneK}(Z^*), &\quad\dev Z^*\in \mathcal{C}_{\coneK}(Z^*) \\
    S^* \in \mathcal{F}_{\coneK^*}(S^*)=-\mathcal{N}_{\coneK}(Z^*), &\quad\dev S^* \in \mathcal{C}_{\coneK^*}(S^*)
\end{align*}

Define the basis transform operator $T^*: \mathbb S^n \rightarrow \mathbb S^n$ by
$$T^*(X) = U^{*T} X U^* = \begin{bmatrix} T^*(X)_{00} & T^*(X)_{01}\\
    T^*(X)_{01}^{T} & T^*(X)_{11}
\end{bmatrix},$$
where $T^*(X)_{00} \in \mathbb{R}^{|J| \times |J|}$, $T^*(X)_{01} \in \mathbb{R}^{|J| \times (n-|J|)}$, and $T^*(X)_{11} \in \mathbb{R}^{(n-|J|) \times (n-|J|)}$. Note that $U^*$ is the eigenbasis of $Z^*$, which is fixed and does not depend on the perturbation, thus we can treat $U^*$ as a constant matrix and ignore its differential when deriving the reduced problem.

Now we introduce the cone geometries at the optimal solution.
\begin{proposition}\label{lemma:S_perp}
\begin{equation}
S^{*\perp} = \{Z \in \mathbb{S}_{n}: \ip{T^*(Z)_{00},T^*(S^*)_{00}} = 0 \}
\end{equation}
\end{proposition}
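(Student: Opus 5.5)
The argument reduces to showing that in the eigenbasis of $Z^*$ the matrix $T^*(S^*)$ is zero outside its $00$ block. Once that holds, the orthogonal invariance of the trace inner product collapses $\ip{Z,S^*}$ to an inner product of $00$ blocks.

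\textbf{Step 1: orthogonal invariance.} Let $U^*=[U_0^*\;U_1^*]$ be the full orthogonal eigenbasis of $Z^*$, ordered so the first $|J|$ columns span $\ker(Z^*)$. Since $U^*$ is orthogonal, for every $Z\in\mathbb S^n$
\[
\ip{Z,S^*}=\operatorname{tr}(ZS^*)=\operatorname{tr}\bigl(U^{*T}ZU^*\,U^{*T}S^*U^*\bigr)=\ip{T^*(Z),T^*(S^*)}.
\]

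\textbf{Step 2: block structure of $T^*(S^*)$.} By the KKT conditions, $Z^*\succeq0$, $S^*\succeq0$ and $\ip{Z^*,S^*}=0$. For two PSD matrices a vanishing trace inner product forces $Z^*S^*=0$: write $\operatorname{tr}(Z^*S^*)=\|Z^{*1/2}S^{*1/2}\|_F^2=0$, so $Z^{*1/2}S^{*1/2}=0$, and hence $Z^*S^*=0$.

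Therefore the columns of $S^*$ lie in $\ker(Z^*)=\operatorname{range}(U_0^*)$. It follows that $U_1^{*T}S^*=0$, and by symmetry $S^*U_1^*=0$. So $T^*(S^*)_{01}=U_0^{*T}S^*U_1^*=0$ and $T^*(S^*)_{11}=U_1^{*T}S^*U_1^*=0$, which gives
\[
T^*(S^*)=\begin{bmatrix} T^*(S^*)_{00} & 0\\ 0 & 0\end{bmatrix}.
\]

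\textbf{Step 3: conclusion.} Combining the two steps, for every $Z\in\mathbb S^n$
\[
\ip{Z,S^*}=\ip{T^*(Z)_{00},T^*(S^*)_{00}},
\]
because the off-diagonal and $11$ contributions are multiplied by zero blocks. Hence $Z\in S^{*\perp}$ if and only if $\ip{T^*(Z)_{00},T^*(S^*)_{00}}=0$, which is the claimed description of $S^{*\perp}$.

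The only substantive step is deriving $Z^*S^*=0$ from $\ip{Z^*,S^*}=0$, and the square-root argument settles it. Strict complementarity, which makes $T^*(S^*)_{00}\succ0$, is not needed here, though it matters for later lemmas.
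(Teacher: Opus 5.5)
Your proof is correct and follows the same route as the paper: invariance of the trace inner product under the orthogonal change of basis $T^*$, then the observation that $T^*(S^*)$ vanishes outside its $00$ block, so only the $00$ blocks contribute. The paper writes down that block form of $T^*(S^*)$ without justification, whereas you derive it from $\operatorname{tr}(Z^*S^*)=\|Z^{*1/2}S^{*1/2}\|_F^2=0 \Rightarrow Z^*S^*=0$, which fills a step the paper leaves implicit.
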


\begin{proof}[\textbf{Proof of Lemma~\ref{lemma:S_perp}.}]
\begin{align*}
S^{*\perp} &= \{Z \in \mathbb{S}_{n}: \ip{Z,S^*} = 0\} \\
& = \{Z \in \mathbb{S}_{n}: \ip{T^*(Z),T^*(S^*)}
 = 0\} \\
 &= \{Z \in \mathbb{S}_{n}: \ip{\begin{bmatrix}
    T^*(Z)_{00} & T^*(Z)_{01} \\
    T^*(Z)_{01}^{T} & T^*(Z)_{11}
\end{bmatrix},\begin{bmatrix}
    T^*(S^*)_{00} & 0 \\
    0 & 0
\end{bmatrix}}= 0\} \\
&= \{Z \in \mathbb{S}_{n}: \ip{T^*(Z)_{00},T^*(S^*)_{00}} = 0\}
\end{align*}
\end{proof}

\begin{proposition}\label{lemma:active_face_sdp}
The active face of $Z^*$, $\mathcal{F}_{\coneK}(Z^*) = \coneK \cap S^{*\perp}$, is equivalent to
\begin{equation}
\mathcal{F}_{\coneK}(Z^*) = \{Z \in \mathbb{S}^{+}_{n}: Z\succeq 0, \ip{S^*,Z} = 0\} = \{Z: T^*(Z) = \begin{bmatrix}
    0 & 0 \\
    0 & T^*(Z)_{11}
\end{bmatrix}, T^*(Z)_{11} \succeq 0\}
\end{equation}
\end{proposition}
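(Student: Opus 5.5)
We need to prove Lemma~\ref{lemma:active_face_sdp}: the active face of $Z^*$ equals the set of PSD $Z$ whose transformed matrix $T^*(Z)$ is supported only on the $11$ block, with that block PSD.

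By Lemma~\ref{lemma:active_face}, $\mathcal F_{\coneK}(Z^*)=\coneK\cap S^{*\perp}$. By Lemma~\ref{lemma:S_perp}, membership in $S^{*\perp}$ is the single condition $\ip{T^*(Z)_{00},T^*(S^*)_{00}}=0$. So it suffices to characterize PSD matrices $Z$ satisfying this condition. Two facts will drive the argument.

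\begin{enumerate}
\item Since $U^*$ is orthogonal, $T^*$ is a congruence. Hence $Z\succeq0$ if and only if $T^*(Z)\succeq0$.
\item Strict complementarity forces $T^*(S^*)_{00}\succ0$.
\end{enumerate}

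For the second fact, write $\bar S:=T^*(S^*)_{00}$. Complementarity $\ip{S^*,Z^*}=0$ together with $S^*,Z^*\succeq0$ gives $S^*Z^*=0$, so the range of $S^*$ lies in $\ker Z^*=\operatorname{span}U_0^*$. Thus $S^*=U_0^*\bar S U_0^{*T}$, which is consistent with the block form already used in Lemma~\ref{lemma:S_perp}.

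Strict complementarity for the PSD cone means $\operatorname{rank}Z^*+\operatorname{rank}S^*=n$. This is exactly the condition $S^*\in\relint{-\mathcal N_{\coneK}(Z^*)}$, since $-\mathcal N_{\coneK}(Z^*)=\{U_0^*WU_0^{*T}:W\succeq0\}$, whose relative interior is $W\succ0$. Hence $\bar S\succ0$. This identification of the relative interior is the one step that needs care.

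For the inclusion $\supseteq$, take $Z$ with $T^*(Z)=\operatorname{diag}(0,T^*(Z)_{11})$ and $T^*(Z)_{11}\succeq0$. Then $T^*(Z)\succeq0$, so $Z\succeq0$ by the congruence fact. Also $T^*(Z)_{00}=0$, so the condition of Lemma~\ref{lemma:S_perp} holds trivially. Hence $Z\in\coneK\cap S^{*\perp}$.

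For the inclusion $\subseteq$, take $Z\succeq0$ with $\ip{S^*,Z}=0$. Then $T^*(Z)\succeq0$, so its principal block $A:=T^*(Z)_{00}$ is PSD. We have $\ip{A,\bar S}=\operatorname{tr}(\bar S^{1/2}A\bar S^{1/2})=0$. The matrix $\bar S^{1/2}A\bar S^{1/2}$ is PSD with zero trace, so it vanishes. Because $\bar S\succ0$, it follows that $A=0$.

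A PSD matrix with a zero diagonal block has zero off-diagonal blocks in the corresponding rows and columns. This follows from the $2\times2$ minor inequalities $|X_{ik}|^2\le X_{ii}X_{kk}$, or equivalently from $x^TXx=0$ implying $Xx=0$. Hence $T^*(Z)_{01}=0$, and $T^*(Z)_{11}\succeq0$ as a principal submatrix of a PSD matrix. This gives the claimed block form and completes the proof.

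The main obstacle is the justification that $\bar S\succ0$ under strict complementarity. Without it, a singular $\bar S$ would allow nonzero $T^*(Z)_{00}$ supported on $\ker\bar S$, and the face would be larger. I would state the description of $-\mathcal N_{\mathbb S^n_+}(Z^*)$ explicitly, citing \citet{bonnans2000perturbation}, to make this step airtight.
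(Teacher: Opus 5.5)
Your proof is correct and follows the same route as the paper: strict complementarity gives $T^*(S^*)_{00}\succ 0$, which forces $T^*(Z)_{00}=0$ for every PSD $Z\in S^{*\perp}$, and positive semidefiniteness of $T^*(Z)$ then forces $T^*(Z)_{01}=0$. You also justify $T^*(S^*)_{00}\succ 0$ through the relative interior of $-\mathcal{N}_{\coneK}(Z^*)$ and prove the reverse inclusion, both of which the paper leaves implicit.
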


\begin{proof}[\textbf{Proof of Lemma~\ref{lemma:active_face_sdp}.}]
Under strict complementarity, $T^*(S^*)_{00}\succ0$.
Since $T^*(Z)_{00}\succeq0$, the identity
$\langle T^*(Z)_{00},T^*(S^*)_{00}\rangle=0$
implies that $T^*(Z)_{00}=0$.
The positive semidefiniteness of $T^*(Z)$ then yields
$T^*(Z)_{01}=0$.
\end{proof}

\begin{proposition}
    \label{prop:tangent_cone_sdp}
    The tangent cone of $\coneK$ at $Z^*$ is equivalent to
\begin{equation}
\mathcal{T}_\coneK(Z^*) = \{\dev Z\in\mathbb{S}_{n}: T^*(\dev Z)_{00} \succeq 0\}
\end{equation}
\end{proposition}

\begin{proof}[\textbf{Proof of Lemma~\ref{prop:tangent_cone_sdp}.}]
We first prove necessity. If
$\dev Z\in\mathcal{T}_{\coneK}(Z^*)$, then there exist
$\alpha_k\geq0$ and $Z_k\succeq0$ such that
$\alpha_k(Z_k-Z^*)\to\dev Z$. Since $T^*(Z^*)_{00}=0$, the $00$
principal block of $\alpha_kT^*(Z_k-Z^*)$ is
$\alpha_kT^*(Z_k)_{00}\succeq0$. Taking the limit gives
$T^*(\dev Z)_{00}\succeq0$.

Conversely, suppose that $T^*(\dev Z)_{00}\succeq0$. For any
$\epsilon>0$, define $\dev Z_\epsilon$ by
\begin{equation}
T^*(\dev Z_\epsilon)
=T^*(\dev Z)+
\begin{bmatrix}
\epsilon I&0\\
0&0
\end{bmatrix}.
\end{equation}
For sufficiently large $\alpha>0$,
$T^*(Z^*)_{11}+\alpha^{-1}T^*(\dev Z)_{11}\succ0$. Moreover, since
$T^*(\dev Z)_{00}+\epsilon I\succ0$, the corresponding Schur complement
satisfies
\begin{equation}
\alpha^{-1}\bigl(T^*(\dev Z)_{00}+\epsilon I\bigr)
-\alpha^{-2}T^*(\dev Z)_{01}
\bigl(T^*(Z^*)_{11}+\alpha^{-1}T^*(\dev Z)_{11}\bigr)^{-1}
T^*(\dev Z)_{01}^{T}\succeq0
\end{equation}
for all sufficiently large $\alpha>0$. Therefore,
\begin{equation}
T^*(Z^*+\alpha^{-1}\dev Z_\epsilon)
=
\begin{bmatrix}
\alpha^{-1}\bigl(T^*(\dev Z)_{00}+\epsilon I\bigr)&\alpha^{-1}T^*(\dev Z)_{01}\\
\alpha^{-1}T^*(\dev Z)_{01}^{T}&T^*(Z^*)_{11}+\alpha^{-1}T^*(\dev Z)_{11}
\end{bmatrix}
\succeq0.
\end{equation}
Setting $Z_\epsilon:=Z^*+\alpha^{-1}\dev Z_\epsilon\succeq0$ gives
$\dev Z_\epsilon=\alpha(Z_\epsilon-Z^*)$, and hence
$\dev Z_\epsilon\in\mathcal{T}_{\coneK}(Z^*)$. Since
$\dev Z_\epsilon\to\dev Z$ as $\epsilon\downarrow0$ and the tangent cone
is closed, $\dev Z\in\mathcal{T}_{\coneK}(Z^*)$.

Thus,
\begin{equation}
\mathcal{T}_{\coneK}(Z^*)
=\{\dev Z\in\mathbb{S}_n:T^*(\dev Z)_{00}\succeq0\}.
\end{equation}
\end{proof}

The conditions for the dual variable $S^*$ are similar. We have
\begin{proposition}
\begin{equation}
Z^{*\perp} = \{S\in \mathbb{S}_{n}: \ip{T^*(S)_{11}, T^*(Z^*)_{11}} = 0 \}
\end{equation}
\end{proposition}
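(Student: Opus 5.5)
The plan mirrors the proof of Lemma~\ref{lemma:S_perp}, with the roles of $Z^*$ and $S^*$ exchanged. The idea is to move to the fixed eigenbasis $U^*$ of $Z^*$, where both $Z^*$ and $S^*$ become block diagonal with complementary supports. The inner product can then be computed blockwise.

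First, I will record that the map $T^*$ preserves the trace inner product: $\ip{T^*(X),T^*(Y)}=\operatorname{tr}(U^{*T}XU^*U^{*T}YU^*)=\ip{X,Y}$, because $U^*=[U_0^*,U_1^*]$ is orthogonal. This gives $\ip{S,Z^*}=\ip{T^*(S),T^*(Z^*)}$ for every $S\in\mathbb{S}_n$.

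Next, I will compute $T^*(Z^*)$. Since $Z^*=U_1^*\Lambda_1^*U_1^{*T}$ and $U_0^{*T}U_1^*=0$, we get
\[
T^*(Z^*)=\begin{bmatrix}0&0\\0&\Lambda_1^*\end{bmatrix},
\]
so $T^*(Z^*)_{00}=0$, $T^*(Z^*)_{01}=0$, and $T^*(Z^*)_{11}=\Lambda_1^*$. Only the $11$ block contributes to the blockwise pairing, which gives $\ip{S,Z^*}=\ip{T^*(S)_{11},T^*(Z^*)_{11}}$.

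Writing out the resulting chain of set equalities, exactly as in the proof of Lemma~\ref{lemma:S_perp}, yields the claim. No step here is a real obstacle. The only point needing care is that this computation uses only the eigenstructure of $Z^*$, not strict complementarity or the sign structure of $S^*$. The zero blocks of $T^*(Z^*)$ hold by construction of $U^*$, whereas in Lemma~\ref{lemma:S_perp} the analogous zero blocks of $T^*(S^*)$ relied on complementarity ($Z^*S^*=0$ forces $S^*$ to be supported on $\ker Z^*$).
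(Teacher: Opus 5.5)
Your proposal is correct and follows the same approach as the paper. The paper gives no separate proof and only says this case is ``similar'' to Lemma~\ref{lemma:S_perp}; your argument (invariance of the inner product under $T^*$, then the blockwise pairing with $T^*(Z^*)=\operatorname{diag}(0,\Lambda_1^*)$) is exactly that mirrored computation, and you are right that it needs no complementarity assumption.
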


\begin{proposition}\label{lemma:active_face_dual_sdp}
    The active face of $S^*$, $\mathcal{F}_{\coneK^*}(S^*) = \coneK^* \cap Z^{*\perp}= -\mathcal{N}_{\coneK}(Z^*) $, is equivalent to
\begin{equation}
\begin{aligned}
\mathcal{F}_{\coneK^*}(S^*) &= -\mathcal{N}_{\coneK}(Z^*)
 = \{S\in \mathbb{S}_{n}: S\succeq 0, SZ^* = 0\} \\
&= \left\{S: T^*(S) = \begin{bmatrix}
    T^*(S)_{00} & 0 \\
    0 & 0
\end{bmatrix},\quad T^*(S)_{00} \succeq 0 \right\}.
\end{aligned}
\end{equation}
\end{proposition}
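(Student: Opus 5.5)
The plan is to prove the two descriptions separately and then show they coincide. The first equality, $\mathcal F_{\coneK^*}(S^*)=-\mathcal N_{\coneK}(Z^*)$, is already supplied by Lemma~\ref{lemma:active_face}, since $\mathbb S^n_+$ is self-dual. So the work is in characterizing $\coneK^*\cap Z^{*\perp}$ concretely.

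First I would show that for $S\succeq 0$ the conditions $\ip{S,Z^*}=0$ and $SZ^*=0$ are equivalent. One direction is immediate: $SZ^*=0$ gives $\operatorname{tr}(SZ^*)=0$. For the other, write $\ip{S,Z^*}=\operatorname{tr}(Z^{*1/2}SZ^{*1/2})$. The matrix $Z^{*1/2}SZ^{*1/2}$ is PSD with zero trace, so it vanishes. Hence $S^{1/2}Z^{*1/2}=0$, and therefore $SZ^*=0$. This gives the middle description $\{S\succeq0:SZ^*=0\}$.

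Next I would pass to the rotated coordinates via the orthogonal map $T^*$. This is the dual analogue of Lemma~\ref{lemma:active_face_sdp}. Because $T^*$ is an orthogonal congruence, $S\succeq0$ holds iff $T^*(S)\succeq0$. Moreover, $T^*(Z^*)=\mathrm{diag}(0,\Lambda_1^*)$ with $\Lambda_1^*\succ0$.

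By the unlabeled proposition for $Z^{*\perp}$ stated just above, $\ip{S,Z^*}=0$ reduces to $\ip{T^*(S)_{11},\Lambda_1^*}=0$. Since $T^*(S)_{11}\succeq0$ as a principal block of a PSD matrix, and $\Lambda_1^*\succ0$, this forces $T^*(S)_{11}=0$. A PSD matrix with a zero diagonal block has the corresponding off-diagonal blocks equal to zero, so $T^*(S)_{01}=0$. This is the same argument as in Lemma~\ref{lemma:active_face_sdp}, via $2\times2$ principal minors or the Schur complement.

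What remains is $T^*(S)_{00}\succeq0$. Conversely, any $S$ with $T^*(S)=\mathrm{diag}(T^*(S)_{00},0)$ and $T^*(S)_{00}\succeq0$ is PSD. It also satisfies $T^*(S)T^*(Z^*)=0$, and hence $SZ^*=0$. This closes the chain of equalities.

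The only mildly delicate point is the step $\ip{S,Z^*}=0\Rightarrow SZ^*=0$, that is, trace zero of a PSD product implying that the product vanishes. The block argument above handles it directly as well. Unlike Lemma~\ref{lemma:active_face_sdp}, no strict complementarity is needed here, because positivity comes from $\Lambda_1^*\succ0$ by construction.
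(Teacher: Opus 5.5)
Your proof is correct and follows the paper's implicit argument: the paper states this lemma without proof as the dual mirror of Lemma~\ref{lemma:active_face_sdp}, where the positive-definite block $T^*(Z^*)_{11}=\Lambda_1^*$ forces $T^*(S)_{11}=0$ and then $T^*(S)_{01}=0$. Your remark that strict complementarity is unnecessary holds for the concrete description of $\coneK^*\cap Z^{*\perp}$, but strict complementarity is still needed, via Lemma~\ref{lemma:active_face}, to identify that set with the active face $\mathcal F_{\coneK^*}(S^*)$.
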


\begin{proposition}
     The tangent cone of $\coneK^*$ at $S^*$ is
\begin{equation}
\mathcal{T}_{\coneK^*}(S^*) = \{\dev S\in \mathbb{S}_{n}: T^*(\dev S)_{11} \succeq 0\}
\end{equation}
\end{proposition}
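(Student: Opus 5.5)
We will prove $\mathcal{T}_{\coneK^*}(S^*)=\{\dev S\in\mathbb S_n: T^*(\dev S)_{11}\succeq0\}$ by mirroring the proof of Lemma~\ref{prop:tangent_cone_sdp}, with the roles of the $00$ and $11$ blocks interchanged. Since $\coneK^*=\mathbb S^n_+$ (self-dual), the only difference is that $S^*$, by Lemma~\ref{lemma:active_face_dual_sdp}, has $T^*(S^*)=\mathrm{diag}(T^*(S^*)_{00},0)$, and strict complementarity gives $T^*(S^*)_{00}\succ0$. The zero block of $S^*$ is therefore the $11$ block, indexed by the range of $Z^*$. Since $U^*$ is orthogonal, $T^*$ is a linear isometry of $\mathbb S^n$ preserving $\mathbb S^n_+$, so we may work entirely in transformed coordinates.

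For necessity, we take $\dev S\in\mathcal T_{\coneK^*}(S^*)$ and write it as a limit $\alpha_k(S_k-S^*)\to\dev S$ with $\alpha_k\ge0$ and $S_k\succeq0$. The $11$ block of $\alpha_kT^*(S_k-S^*)$ equals $\alpha_kT^*(S_k)_{11}$, because $T^*(S^*)_{11}=0$. This block is PSD, being a principal block of a PSD matrix, and passing to the limit gives $T^*(\dev S)_{11}\succeq0$.

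For sufficiency, we take $T^*(\dev S)_{11}\succeq0$, fix $\epsilon>0$, and perturb to $\dev S_\epsilon$ defined by $T^*(\dev S_\epsilon)=T^*(\dev S)+\mathrm{diag}(0,\epsilon I)$. For large $\alpha$, the $00$ block $T^*(S^*)_{00}+\alpha^{-1}T^*(\dev S)_{00}$ is positive definite, since $T^*(S^*)_{00}\succ0$. We then form the Schur complement of $T^*(S^*+\alpha^{-1}\dev S_\epsilon)$ with respect to this $00$ block:
\[
\alpha^{-1}\bigl(T^*(\dev S)_{11}+\epsilon I\bigr)-\alpha^{-2}T^*(\dev S)_{01}^T\bigl(T^*(S^*)_{00}+\alpha^{-1}T^*(\dev S)_{00}\bigr)^{-1}T^*(\dev S)_{01}.
\]
The first term is at least $\alpha^{-1}\epsilon I$, while the second is $O(\alpha^{-2})$, so the Schur complement is PSD for large $\alpha$. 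Hence $S_\epsilon:=S^*+\alpha^{-1}\dev S_\epsilon\succeq0$, and $\dev S_\epsilon=\alpha(S_\epsilon-S^*)$ lies in the tangent cone. Letting $\epsilon\downarrow0$ and using closedness of the tangent cone gives $\dev S\in\mathcal T_{\coneK^*}(S^*)$.

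The main step needing care is the Schur complement estimate. It relies on invertibility of the $00$ block of $S^*$, which is exactly strict complementarity, and on the $\epsilon I$ regularization, which dominates the $O(\alpha^{-2})$ cross term. The $\epsilon$ is needed because $T^*(\dev S)_{11}$ may be singular. Everything else is a direct transcription of the primal argument.
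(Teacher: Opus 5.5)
Your proof is correct. It is exactly the paper's argument for $\mathcal{T}_{\coneK}(Z^*)$ with the $00$ and $11$ blocks interchanged, which is what the paper means when it states this lemma without proof as ``similar''. You also correctly identify the one real difference: the Schur-complement step needs $T^*(S^*)_{00}\succ0$, which comes from strict complementarity, whereas in the primal case $T^*(Z^*)_{11}=\Lambda_1^*\succ0$ holds by construction of the eigendecomposition.
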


\begin{proposition}
    \label{lemma:critical_cone_dual_sdp}
    The critical cone, $\mathcal{C}_{\coneK^*}(S^*) = \mathcal{T}_{\coneK^*}(S^*) \cap Z^{*\perp}$, is equivalent to
\begin{equation}
 \mathcal{C}_{\coneK^*}(S^*) = \{\dev S\in \mathbb{S}_{n}: T^*(\dev S)_{11} = 0 \}
\end{equation}
\end{proposition}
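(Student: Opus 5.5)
The plan is to intersect the tangent cone of $\coneK^*=\mathbb S^n_+$ at $S^*$ with $Z^{*\perp}$, both written in the rotated coordinates $T^*(\cdot)$. The two ingredients are already stated in the preceding lemmas:
\begin{itemize}
\item the tangent cone is $\mathcal T_{\coneK^*}(S^*)=\{\dev S: T^*(\dev S)_{11}\succeq0\}$;
\item the orthogonal complement is $Z^{*\perp}=\{S:\ip{T^*(S)_{11},T^*(Z^*)_{11}}=0\}$.
\end{itemize}
The second follows exactly as in Lemma~\ref{lemma:S_perp}. Since $T^*$ is an orthogonal congruence, $\ip{S,Z^*}=\ip{T^*(S),T^*(Z^*)}$. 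The matrix $T^*(Z^*)$ is block diagonal with its only nonzero block $T^*(Z^*)_{11}=\Lambda_1^*$.

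\textbf{Step 1.} This is the key use of the standing assumptions. From the compact eigendecomposition, $T^*(Z^*)_{11}=\Lambda_1^*\succ0$ is positive definite. The tangent-cone description is the dual analogue of Lemma~\ref{prop:tangent_cone_sdp}, obtained by swapping the roles of the $00$ and $11$ blocks. It relies on $T^*(S^*)$ being supported only on the $00$ block, which is Lemma~\ref{lemma:active_face_dual_sdp}. Under strict complementarity that block is positive definite, i.e.\ $S^*$ has rank exactly $|J|$. The Schur-complement argument used for the primal tangent cone therefore transfers verbatim with the blocks interchanged.

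\textbf{Step 2.} Intersect the two sets. Take $\dev S$ in both. Then $X:=T^*(\dev S)_{11}\succeq0$ and $\ip{X,\Lambda_1^*}=0$ with $\Lambda_1^*\succ0$. Write $\ip{X,\Lambda_1^*}=\operatorname{tr}\bigl(\Lambda_1^{*1/2}X\Lambda_1^{*1/2}\bigr)$. This is the trace of a PSD matrix, so that matrix must be zero, and therefore $X=0$. Conversely, any $\dev S$ with $T^*(\dev S)_{11}=0$ trivially satisfies $T^*(\dev S)_{11}\succeq0$ and the orthogonality condition. This gives the claimed equality.

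\textbf{Main obstacle.} The only delicate point is making sure the dual tangent-cone characterization is valid. It needs $S^*$ to have exactly the complementary kernel $\operatorname{range}(U_1^*)$; otherwise the relevant zero block of $S^*$ would be smaller than the $11$ block. This is where strict complementarity ($Z^*+S^*\succ0$, equivalently $\mu^*\in\relint{-\mathcal N_\coneK(Z^*)}$) enters. I would verify it by noting that $-\mathcal N_\coneK(Z^*)$ equals the set in Lemma~\ref{lemma:active_face_dual_sdp}, whose relative interior consists of the matrices with $T^*(S)_{00}\succ0$. Given this, the remaining argument is the elementary positivity computation of Step 2.
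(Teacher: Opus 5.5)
Your proposal is correct and follows the argument the paper intends. The paper states this lemma without a separate proof, as the dual analogue of Lemmas~\ref{lemma:S_perp}--\ref{prop:tangent_cone_sdp}: you intersect the dual tangent cone $\{T^*(\dev S)_{11}\succeq 0\}$ with $Z^{*\perp}$, then use that a PSD block orthogonal to $\Lambda_1^*\succ0$ must vanish, exactly as in the proof of Lemma~\ref{lemma:active_face_sdp}. You also correctly identify that strict complementarity ($T^*(S^*)_{00}\succ0$, i.e.\ $\ker S^*=\operatorname{range}U_1^*$) is what makes the $11$-block description of the dual tangent cone valid.
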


With these cone-geometric characterizations in place, we are ready to prove Lemma~\ref{lemma:sdp_H_C}.

\begin{proof}[\textbf{Proof of Lemma~\ref{lemma:sdp_H_C}}.]
In this problem, we have $G_{\theta}(Z) = G_{\theta_0}(Z) = Z$. Thus we omit the subscript $\theta$ in the following derivations.

Differentiating the complementarity condition $S^*Z^*=0$ gives
\begin{equation}
    \dev S^* Z^* + S^* \dev Z^* = 0
\end{equation}
This is equivalent to
\begin{equation}\label{eqn:dev_S_Z}
     T^*(\dev S^*) T^*(Z^*)  + T^*(S^*) T^*(\dev Z^*) = 0
\end{equation}
Note that this step only multiplies by the fixed matrix $U^*$, so $\dev U^*$ does not appear in the following derivations.

Substituting the cone structures $\dev S^*\in\mathcal{C}_{\coneK^*}(S^*)$, $Z^*\in\mathcal{F}_\coneK(Z^*)$, $S^*\in\mathcal{F}_{\coneK^*}(S^*)$, and $\dev Z^*\in\mathcal{C}_{\coneK}(Z^*)$, given by Lemmas~\ref{lemma:critical_cone_dual_sdp}, \ref{lemma:active_face_sdp}, and~\ref{lemma:active_face_dual_sdp} and by~\Eqref{eqn:sdp_critical_cone}, into~\Eqref{eqn:dev_S_Z} yields
\begin{align}
    \begin{bmatrix}
        T^*(\dev S^*)_{00} & T^*(\dev S^*)_{01} \\
        T^*(\dev S^*)_{01}^{T} & 0
    \end{bmatrix} \begin{bmatrix}
        0 & 0 \\
        0 & T^*(Z^*)_{11}
    \end{bmatrix} + \begin{bmatrix}
        T^*(S^*)_{00} & 0 \\
        0 & 0
\end{bmatrix} \begin{bmatrix}
        0 & T^*(\dev Z^*)_{01}\\
        T^*(\dev Z^*)_{01}^{T} & T^*(\dev Z^*)_{11}
\end{bmatrix} & = 0 \\
\Leftrightarrow \begin{bmatrix}
        0 & T^*(\dev S^*)_{01} T^*(Z^*)_{11} + T^*(S^*)_{00} T^*(\dev Z^*)_{01} \\
        0 & 0
\end{bmatrix} & = 0
\end{align}
Thus, $T^*(\dev S^*)_{01} = - T^*(S^*)_{00} T^*(\dev Z^*)_{01} T^*(Z^*)_{11}^{-1}$, and $\dev S^*$ can be represented as
\begin{equation}
    \dev S^* = U^* \begin{bmatrix}
        0 & T^*(\dev S^*)_{01} \\
        T^*(\dev S^*)_{01}^{T} & 0
    \end{bmatrix} U^{*T} + U^* \begin{bmatrix}
        T^*(\dev S^*)_{00} & 0 \\
        0 & 0
    \end{bmatrix} U^{*T}
\end{equation}
After some rewriting, we have
\begin{equation}
    \dev S^* = -\left(Z^{*\dagger} \dev Z^* S^* +  S^* \dev Z^* Z^{*\dagger}\right) + \sum_{j,j' \in J} T^*(\dev S^*)_{00,jj'} u^*_j u^{*T}_{j'} \label{eqn:dS}
\end{equation}
The first term describes the change in $S^*$ along the critical cone $\mathcal{C}_{\coneK}(Z^*)$, whereas the second lies in $\mathcal{C}_{\coneK}(Z^*)^{\perp}$. Vectorizing~\Eqref{eqn:dS} gives

\begin{equation}
    \begin{aligned}
    \vecm{\dev S^*} &= -(Z^{*\dagger} \otimes S^* + S^* \otimes Z^{*\dagger}) \vecm{\dev Z^*} + (U^*_0 \otimes U^*_0) \vecm{T^*(\dev S^*)_{00}} \\
     &= -(Z^{*\dagger} \otimes S^* + S^* \otimes Z^{*\dagger}) \vecm{\dev Z^*} + \sum_{j,j' \in J} T^*(\dev S^*)_{00,jj'} \vecm{u^*_j u^{*T}_{j'}}
    \end{aligned}
\end{equation}

From the first term, we have $H^* = -(Z^{*\dagger} \otimes S^* + S^* \otimes Z^{*\dagger})$.

Since $T^*(\dev S^*)_{00}$ is symmetric, grouping the symmetric terms in the
second term of \Eqref{eqn:dS} gives $C^*$ with rows
$\vecm{B_{jj'}}^T$ for $j,j'\in J$ with $j'\leq j$, where
\begin{equation}
B_{jj}=u_j^*u_j^{*T},
\qquad
B_{jj'}=u_j^*u_{j'}^{*T}+u_{j'}^*u_j^{*T},
\quad j'<j.
\end{equation}
The corresponding coefficients form $\dev\hat{\mu}$.
\end{proof}

\paragraph{Example: multiple zero eigenvalues and the critical cone.}
Consider $Z^*=\operatorname{diag}(0,0,1)\in\mathbb S_+^3$ and the
perturbations $\delta Z_1=\epsilon e_1e_1^T$ and
$\delta Z_2=\epsilon e_2e_2^T$, where $\epsilon>0$ and $e_1,e_2$ are the
standard basis vectors associated with the two zero eigenvalues. Although
$Z^*+\delta Z_1$ and $Z^*+\delta Z_2$ have the same eigenvalues
$(0,\epsilon,1)$, their normal cones are
\begin{align*}
\mathcal N_{\mathbb S_+^3}(Z^*+\delta Z_1)
  &=\{-\alpha e_2e_2^T:\alpha\geq0\},\\
\mathcal N_{\mathbb S_+^3}(Z^*+\delta Z_2)
  &=\{-\alpha e_1e_1^T:\alpha\geq0\}.
\end{align*}
Thus, arbitrarily small perturbations within the zero-eigenspace can select
different normal directions, precluding a single smooth local representation
of the normal-cone mapping along these directions.

Under strict complementarity, however, the critical-cone condition
$U_0^{*T}\delta ZU_0^*=0$, with $U_0^*=[e_1,e_2]$, restricts admissible
perturbations to the form
\[
\delta Z=\begin{bmatrix}0&0&a\\0&0&b\\a&b&c\end{bmatrix}.
\]
It therefore excludes all perturbations within the zero-eigenspace, including
$\delta Z_1$ and $\delta Z_2$, while retaining cross-eigenspace and
positive-eigenspace directions. In these admissible directions, the curvature
term used by the reduced problem is well defined.

\section{Additional Experimental Details}
Experiments in this paper were run on an M2 MacBook Air with 8 CPU cores and 16GB RAM.

We corrected a variable-name error in the latest publicly available
FFOLayer implementation at the time of our experiments (commit \texttt{28905f3e1750fca5b8918954d5d2ea5bed0cbacc}). Before this one-line fix, its gradients differed substantially from those produced by both dOPT and CVXPYLayers; after the fix, all three methods are numerically consistent. All reported \emph{FFOLayer (Modified)} results use the corrected code, which is included in our repository.

\label{app:experiment_details}
\subsection{Details for Synthetic Experiments (Section~\ref{sec:synthetic_experiments})}
For each problem class and dimension, we evaluate ten randomly generated
instances and report the mean of each metric.

\paragraph{Random SOCPs.} We generate problems of the form
\begin{equation}
    \min_z\ q^\top z
    \quad\text{s.t.}\quad
    \lVert A_j z+b_j\rVert_2\leq c_j^\top z+d_j,\quad
    Fz=g,\quad Gz\leq h.
\end{equation}
For a decision dimension $n$, we use $\lfloor n/2\rfloor$ second-order
cones, each with vector dimension $\lfloor n/3\rfloor$, together with two
equalities and ten linear inequalities. We draw $q$, $A_j$, $b_j$, $c_j$,
$F$, and $G$ independently from a standard normal distribution. After drawing
$z_0\sim\mathcal N(0,I)$, we set
$d_j=\lVert A_j z_0+b_j\rVert_2-c_j^\top z_0$, $g=Fz_0$, and
$h=Gz_0+\mathbf 1$, ensuring that every generated instance is feasible.

\paragraph{Random SDPs.} We generate
\begin{equation}
    \min_{Z\succeq0}\ \langle C,Z\rangle
    \quad\text{s.t.}\quad A\operatorname{vec}(Z)=b,
\end{equation}
where $Z\in\mathbb S^n$ and
$m=\lfloor(n(n+1)/2-1)/2\rfloor$ equality constraints are used. We construct
$C=Q\Lambda Q^\top$, where $Q$ is obtained from the QR factorization of a
Gaussian matrix and the diagonal entries of $\Lambda$ are logarithmically
spaced from $1$ to $10$. The entries of $A\in\mathbb R^{m\times n^2}$ are
drawn independently from a standard normal distribution. Finally, we construct
a positive-definite matrix $Z_0=U\Lambda_0U^\top$ in the same way, with
eigenvalues logarithmically spaced from $1$ to $10^{1.5}$, and set
$b=A\operatorname{vec}(Z_0)$ to ensure feasibility.

\subsection{Details for vehicle platoon benchmark (Section~\ref{sec:switched_system})}
\label{app:platoon_details}

\paragraph{Benchmark. }
We adapt the networked cooperative vehicle-platoon benchmark of
\citet{makhlouf2014networked,chen2015benchmark}. A platoon with $N$
controlled vehicles has state
\begin{equation}
    x=(e_1,\dot e_1,a_1,\ldots,e_N,\dot e_N,a_N)\in\mathbb{R}^{3N},
\end{equation}
where $e_j$, $\dot e_j$, and $a_j$ are the spacing error, relative velocity,
and acceleration of vehicle $j$, respectively. We use the $N\in\{3,5,10\}$
instances, giving state dimensions $9$, $15$, and $30$. We index the two modes
consistently as $0$ (normal communication) and $1$ (total communication loss):
\begin{equation}
    \dot x=A^{\mathrm{ct}}_{\sigma(t)}x+B^{\mathrm{ct}}_L a_L(t),
    \qquad \sigma(t)\in\{0,1\},
\end{equation}
where $a_L$ is the acceleration command of the manually driven leader. The
leader is external to the $3N$-dimensional state, and its input acts through
$B^{\mathrm{ct}}_L$. Both $a_L(t)$ and $B^{\mathrm{ct}}_L$ are treated as known;
they are not inferred by the learning model.

For the three-vehicle instance, we use the two $9\times9$ continuous-time
matrices printed in the original benchmark. The five- and ten-vehicle
instances provide only the connected dynamics, so we extend the
communication-loss pattern of the three-vehicle instance.

We convert each continuous-time pair $(A_i^{\mathrm{ct}},B_L^{\mathrm{ct}})$
to the discrete-time model
\begin{equation}
    x_{k+1}=A_i x_k+B_i u_k
\end{equation}
by exact zero-order-hold discretization with sampling period
$\Delta t=0.1$.

The input matrices $B_i$ are treated as known physical quantities throughout;
the sampled input $u_k$ is also observed. Only the two mode matrices $A_0,A_1$
and the switching time are learned.

\paragraph{Trajectory data.}
We generate $16$ trajectories of length $T=30$ from the same ground-truth
switching sequence. Initial states are sampled independently as
$x_0\sim0.35\mathcal{N}(0,I)$. To model a leader acceleration command rather
than white-noise actuation, $u_k$ is piecewise constant: every five time steps
a new value is sampled uniformly from $[-2,1]$. The ground-truth system uses
mode $0$ for transitions $k<14$ and mode $1$ for $k\geq14$. We add independent
Gaussian observation noise to every state coordinate, with standard deviation
equal to $1\%$ of that coordinate's empirical standard deviation over the
clean training trajectories. Ground-truth mode labels are retained only for
evaluation and are not exposed to initialization or training.

\paragraph{Initialization.}
We first fit a single pooled dynamics matrix using every observed transition
\begin{equation}
    \bar A=\arg\min_A
    \sum_{j,k}\left\|x_{k+1}^{(j)}-Ax_k^{(j)}-\bar B u_k^{(j)}\right\|_2^2
    +\lambda_{\mathrm{ridge}}\|A\|_F^2,
    \qquad \lambda_{\mathrm{ridge}}=10^{-5},
\end{equation}
where $\bar B=(B_0+B_1)/2$. To avoid ill-conditioned SDP, we break the otherwise exact mode symmetry using
\begin{equation}
    A_0^{(0)}=\bar A-\Delta,
    \qquad A_1^{(0)}=\bar A+\Delta,
    \qquad
    \Delta=10^{-4}\|\bar A\|_F\frac{R}{\|R\|_F},
\end{equation}
with a dense Gaussian matrix $R$. The switch
time is initialized to $\tau^{(0)}=20.3$.

\paragraph{Model architecture and training.}
The learning architecture contains a single differentiable SDP layer, and
$A_0$ and $A_1$ are trainable parameters. For the switching time, we use a
soft-to-hard switching-time schedule. In the numerical formulation, we enforce
$P\succeq10^{-3}I$ and choose
$\alpha=0.015 d$. Specifically, we parameterize the
switching time by an unconstrained scalar $s$, mapped to the valid range as
\begin{equation}
    \tau=(T-1)\operatorname{sigmoid}(s).
\end{equation}
During the first $50$ epochs, we use the differentiable gate
\begin{equation}
    q_k=\operatorname{sigmoid}\!\left(\frac{k-\tau}{\gamma}\right),
    \qquad
    \widetilde A_k=(1-q_k)A_0+q_kA_1,
    \qquad
    \widetilde B_k=(1-q_k)B_0+q_kB_1.
\end{equation}
Here $q_k\in(0,1)$ is the soft probability of using mode $1$ at transition
$k$. Consequently, $\widetilde A_k$ and $\widetilde B_k$ are the effective
time-dependent dynamics used only to propagate the soft-switch model; they are
convex combinations of the two modes, not additional learned matrices. Starting
from the observed initial state, the soft rollout is defined recursively by
\begin{equation}
    \widehat x_0=x_0,
    \qquad
    \widehat x_{k+1}=\widetilde A_k\widehat x_k+\widetilde B_k u_k,
\end{equation}
and its fitting loss is the MSE between the complete predicted and observed
trajectories, $\mathcal{L}_{\mathrm{roll}}
=\operatorname{MSE}((\widehat x_k)_{k=0}^{T-1},(x_k)_{k=0}^{T-1})$.
The temperature $\gamma$ is geometrically annealed from $3.0$ to $0.25$.
This soft stage permits gradients to identify $\tau$ jointly with the two mode
matrices. We use Adam with learning rate $10^{-3}$ for $(A_0,A_1)$ and
$5\times10^{-2}$ for $s$.

\begin{table}[tbp]
    \caption{Performance on the adapted vehicle-platoon benchmark for
    $N\in\{3,5,10\}$ vehicles, with state dimension $d=3N$. Training
    entries are total times for all epochs.}
    \label{tab:platoon_stability}
    \centering
    \begingroup
\centering
\setlength{\tabcolsep}{4pt}
\renewcommand{\arraystretch}{0.96}
\resizebox{\textwidth}{!}{%
\begin{tabular*}{1.30\textwidth}{@{\extracolsep{\fill}}llccc@{}}
\toprule
Method & Metric & & & \\
\cmidrule(lr){1-5}
& Number of vehicles $N$ & 3 & 5 & 10 \\
& State dimension $d$ & 9 & 15 & 30 \\
\midrule
\multirow{4}{*}{dOPT}
& Stability margin $t^\star$ & $8.4\!\times\!10^{-3}$ & $6.3\!\times\!10^{-3}$ & $3.4\!\times\!10^{-3}$ \\
& Forward [ms] & $\mathbf{12.7}$ & $41.4$ & $468.4$ \\
& Backward [ms] & $\mathbf{1.4}$ & $\mathbf{3.6}$ & $\mathbf{26.1}$ \\
& Training [s] & $\mathbf{4.3}$ & $\mathbf{12.0}$ & $\mathbf{128.8}$ \\
\midrule
\multirow{4}{*}{FFOLayer (Mod)}
& Stability margin $t^\star$ & $8.5\!\times\!10^{-3}$ & $6.1\!\times\!10^{-3}$ & $3.4\!\times\!10^{-3}$ \\
& Forward [ms] & $14.3$ & $\mathbf{41.0}$ & $\mathbf{459.0}$ \\
& Backward [ms] & $16.2$ & $43.6$ & $472.7$ \\
& Training [s] & $8.0$ & $21.4$ & $241.9$ \\
\midrule
\multirow{4}{*}{FFOLayer (Lifted)}
& Stability margin $t^\star$ & $8.3\!\times\!10^{-3}$ & $6.3\!\times\!10^{-3}$ & $3.4\!\times\!10^{-3}$ \\
& Forward [ms] & $28.1$ & $84.3$ & $797.4$ \\
& Backward [ms] & $30.5$ & $88.9$ & $815.1$ \\
& Training [s] & $15.1$ & $44.2$ & $405.9$ \\
\midrule
\multirow{4}{*}{CVXPYLayers}
& Stability margin $t^\star$ & $8.5\!\times\!10^{-3}$ & $6.3\!\times\!10^{-3}$ & $3.4\!\times\!10^{-3}$ \\
& Forward [ms] & $93.3$ & $345.3$ & $5592.1$ \\
& Backward [ms] & $6.6$ & $84.3$ & $4948.6$ \\
& Training [s] & $28.6$ & $115.2$ & $13047.9$ \\
\bottomrule
\end{tabular*}
}
\par
\endgroup

\end{table}

After epoch $50$, we harden the switching sequence to
$\sigma(k)=\mathbb{I}[k\geq\tau]$, freeze $\tau$, and optimize only $A_0,A_1$
for another $200$ epochs. Rather than recursively feeding back $\widehat x_k$,
the hard-stage one-step loss predicts every next state directly from the
corresponding observed current state:
\begin{equation}
    \mathcal{L}_{\mathrm{1step}}
    =\frac{1}{N_{\mathrm{traj}}(T-1)}
    \sum_{j,k}\left\|x_{k+1}^{(j)}
    -A_{\sigma(k)}x_k^{(j)}-B_{\sigma(k)}u_k^{(j)}\right\|_2^2.
\end{equation}
This teacher-forced objective isolates transition identification from the
compounding error of a long recursive rollout. The hard stage uses Adam with
learning rate $3\times10^{-3}$. Thus the discontinuity near epoch $50$ in
Figure~\ref{fig:learning_experiments} (Right) is expected: it coincides with both the
soft-to-hard change in the switching model and the change from
$\mathcal{L}_{\mathrm{roll}}$ to $\mathcal{L}_{\mathrm{1step}}$.

\section{Gradient Accuracy}
\label{app:gradient_accuracy}

We validate gradient accuracy on the random SOCP and SDP instances described in
Section~\ref{sec:synthetic_experiments} and Appendix~\ref{app:experiment_details}.  We first compare the gradients produced
by dOPT and CVXPYLayers.  For each method, we vectorize the gradients with
respect to all problem parameters and concatenate them into a single vector.
We report the relative error
\begin{equation}
    \frac{\lVert g_{\mathrm{candidate}}-g_{\mathrm{reference}}\rVert_2}
    {\max\{\lVert g_{\mathrm{reference}}\rVert_2,10^{-12}\}}.
\end{equation}
In each ``A vs. B'' comparison, A is the candidate and B is the reference.

We additionally verify the gradient using the envelope theorem
\citep{afriat1971theory}.  Define the value function as
$V(\theta) := f_\theta(z^*(\theta))$, and let
$\mathcal{L}_\theta(z,\mu,\nu)$ denote the Lagrangian.  The theorem establishes
an equivalence between the derivative of the value function and the partial
derivative of the Lagrangian evaluated at the optimal primal--dual solution:
\begin{equation}
    \nabla_\theta V(\theta)
    =
    \left.
    \partial_\theta \mathcal{L}_\theta(z,\mu,\nu)
    \right|_{(z,\mu,\nu)
    =(z^*(\theta),\mu^*(\theta),\nu^*(\theta))}.
\end{equation}
Here, $\partial_\theta$ differentiates only the explicit dependence on
$\theta$, holding $(z,\mu,\nu)$ fixed.  The right-hand side is therefore an
analytic gradient, which we use as ground truth (GT).  Each entry below is the
mean relative error over ten random instances, using the same forward solution
across methods.
\begin{table}[htbp]
    \centering
    \caption{Gradient relative errors on random SOCPs.}
    \label{tab:socp_gradient_accuracy}
    \small
    \begin{tabular}{lccc}
        \toprule
        Gradient comparison & $n=20$ & $n=100$ & $n=500$ \\
        \midrule
        $\nabla_\theta \mathbf{1}^\top z^*(\theta)$: dOPT vs. CVXPYLayers
            & $1.38\times10^{-2}$ & $3.24\times10^{-3}$ & $1.60\times10^{-1}$ \\
        $\nabla_\theta V(\theta)$: dOPT vs. GT
            & $3.67\times10^{-9}$ & $5.81\times10^{-9}$ & $4.10\times10^{-9}$ \\
        $\nabla_\theta V(\theta)$: CVXPYLayers vs. GT
            & $6.88\times10^{-4}$ & $2.55\times10^{-4}$ & $1.50\times10^{-3}$ \\
        \bottomrule
    \end{tabular}
\end{table}
\begin{table}[htbp]
    \centering
    \caption{Gradient relative errors on random SDPs.}
    \label{tab:sdp_gradient_accuracy}
    \small
    \begin{tabular}{lccc}
        \toprule
        Gradient comparison & $n=20$ & $n=50$ & $n=100$ \\
        \midrule
        $\nabla_\theta \mathbf{1}^\top z^*(\theta)$: dOPT vs. CVXPYLayers
            & $2.24\times10^{-2}$ & $8.21\times10^{-3}$ & $2.49\times10^{-2}$ \\
        $\nabla_\theta V(\theta)$: dOPT vs. GT
            & $1.09\times10^{-7}$ & $1.22\times10^{-7}$ & $1.03\times10^{-7}$ \\
        $\nabla_\theta V(\theta)$: CVXPYLayers vs. GT
            & $5.89\times10^{-3}$ & $4.51\times10^{-3}$ & $1.11\times10^{-2}$ \\
        \bottomrule
    \end{tabular}
\end{table}

The discrepancy between dOPT and CVXPYLayers remains modest, although it is
larger at the highest tested dimension.  For
$\nabla_\theta V(\theta)$, dOPT agrees consistently with the analytic
GT, while the CVXPYLayers error becomes more pronounced at larger dimensions.

\end{document}